\def\beq{\begin{equation}}
\def\eeq{\end{equation}}
\def\beqn{\begin{eqnarray}}
\def\eeqn{\end{eqnarray}}
\def\beqns{\begin{eqnarray*}}
\def\eeqns{\end{eqnarray*}}
\def\argmin{\mathop{\rm argmin}}
\def\Argmin{\mathop{\rm Argmin}}
\def\ST{{\cal T}}
\def\EE{{\mathbb{E}}}
\def\SA{{\cal A}}
\def\SI{{\cal I}}
\def\SQ{{\cal Q}}
\def\SX{{\cal X}}
\def\SY{{\cal Y}}
\def\SN{{\cal N}}
\def\SO{{\cal O}}
\def\SD{{\cal D}}
\def\NN{\mathbb{N}}
\renewcommand{\Re}{\mathbb{R}}
\def\veps{\varepsilon}
\def\equ#1{(\ref{#1})}
\def\reject{{\rm reject}}
\def\tpr{\phi}
\def\fpr{\rho}
\def\prec{\kappa}
\def\Rsel{{\rm R^{S}}}
\def\reject{{\rm reject}}
\def\tprSamp{\phi_n}
\def\fprSamp{\rho_n}
\def\precSamp{\kappa_n}
\def\RselSamp{{\rm R^{S}_n}}
\def\ROC{{\rm ROC}}
\def\PR{{\rm PR}}
\def\at{\texttt{\symbol{64}}}
\def\##1{\relax\ifmmode\mathchoice      
{\mbox{\boldmath$\displaystyle#1$}}
 {\mbox{\boldmath$\textstyle#1$}}
{\mbox{\boldmath$\scriptstyle#1$}}
{\mbox{\boldmath$\scriptscriptstyle#1$}}\else
\hbox{\boldmath$\textstyle#1$}\fi}
\def\leftbb{\mathopen{\rlap{$[$}\hskip1.3pt[}}
\def\rightbb{\mathclose{\rlap{$]$}\hskip1.3pt]}}
\def\phimin{\phi_{\rm min}}
\def\rhomax{\rho_{\rm max}}
\newtheorem{theorem}{Theorem}
\newtheorem{problem}{Problem}
\newtheorem{lemma}{Lemma}
\newtheorem{definition}{Definition}
\newtheorem{claim}{Claim}[theorem]
\title{Reject option models comprising out-of-distribution detection}
\author{%
  Vojtech Franc\quad Daniel Prusa\quad Jakub Paplham\\
  Department of Cybernetics\\
  Faculty of Electrical Engineering\\
  Czech Technical University in Prague\\
  \texttt{\{xfrancv,prusapa1,paplhjak\}@fel.cvut.cz} \\
}
\begin{document}

\maketitle

\begin{abstract}
The optimal prediction strategy for out-of-distribution (OOD) setups is a fundamental question in machine learning. In this paper, we address this question and present several contributions. We propose three reject option models for OOD setups: the Cost-based model, the Bounded TPR-FPR model, and the Bounded Precision-Recall model. These models extend the standard reject option models used in non-OOD setups and define the notion of an optimal OOD selective classifier. We establish that all the proposed models, despite their different formulations, share a common class of optimal strategies. Motivated by the optimal strategy, we introduce double-score OOD methods that leverage uncertainty scores from two chosen OOD detectors: one focused on OOD/ID discrimination and the other on misclassification detection. The experimental results consistently demonstrate the superior performance of this simple strategy compared to state-of-the-art methods. Additionally, we propose novel evaluation metrics derived from the definition of the optimal strategy under the proposed OOD rejection models. These new metrics provide a comprehensive and reliable assessment of OOD methods without the deficiencies observed in existing evaluation approaches.

\end{abstract}

\section{Introduction}

Most methods for learning predictors from data are based on the closed-world assumption, i.e., the training and the test samples are generated i.i.d. from the same distribution, so-called in-distribution (ID). However, in real-world applications, ID test samples can be contaminated by samples from another distribution, the so-called Out-of-Distribution (OOD), which is not represented in training examples. A trustworthy prediction model should detect OOD samples and reject to predict them, while simultaneously minimizing the prediction error on accepted ID samples.

In recent years, the development of deep learning models for handling OOD data has emerged as a critical challenge in the field of machine learning, leading to an explosion of research papers dedicated to developing effective OOD detection methods (OODD)~\cite{Hendrycks-baseline-ICLR17,liang2018enhancing,Dhamija-NIPS2018,Devries-arxiv2018,Malinin-NEURIPS2018,Granese-Nips2021,Chen-PAMI2022,sun2021react,Song-Neurips2022,pmlr-v162-sun22d,9879414}. Existing methods use various principles to learn a classifier of ID samples and a selective function that accepts the input for prediction or rejects it to predict. We further denote the pair of ID classifier and the selective function as OOD selective classifier, borrowing terminology from the non-OOD setup~\cite{Geifman-SelectClass-NIPS2017}. There is an agreement that a good OOD selective classifier should reject OOD samples and simultaneously achieve high classification accuracy on ID samples that are accepted~\cite{Yang-arxiv2022}. To our knowledge, there is surprisingly no formal definition of an optimal OOD selective classifier. Consequently, there is also no consensus on how to evaluate the OODD methods. The commonly used metrics~\cite{Yang-Nips2022}  evaluate only one aspect of the OOD selective classifier, either the accuracy of the ID classifier or the performance of the selective function as an OOD/ID discriminator. Such evaluation is inconclusive and usually inconsistent; e.g., the two most commonly used metrics, AUROC and OSCR, often lead to a completely reversed ranking of evaluated methods (see Sec.~\ref{sec:ExistingMetrics}).


In this paper, we ask the following question: What would be the optimal prediction strategy for the OOD setup in the ideal case when ID and OOD distributions were known? To this end, we offer the contributions: \begin{enumerate*}[label=(\roman*)]
\item We propose three reject option models for the OOD setup: Cost-based model, bounded TPR-FPR model, and Bounded Precision-Recall model. These models extend the standard rejection models used in the non-OOD setup~\cite{Chow-RejectOpt-TIT1970,Pietraszek-AbstainROC-ICML2005} and define the notion of an optimal OOD classifier. 
\item We establish that all the proposed models, despite their different formulations, share a common class of optimal strategies. The optimal OOD selective classifier combines a Bayes ID classifier with a selective function based on a linear combination of the conditional risk and likelihood ratio of the OOD and ID samples. This selective function enables a trade-off between distinguishing ID from OOD samples and detecting misclassifications. \item Motivated by the optimal strategy, we introduce double-score OOD methods that leverage uncertainty scores from two chosen OOD detectors: one focused on OOD/ID discrimination and the other on misclassification detection. We show experimentally that this simple strategy consistently outperforms the state-of-the-art. 
\item We review existing metrics for evaluation of OODD methods and show that they provide incomplete view, if used separately, or inconsistent view of the evaluated methods, if used together. We propose novel evaluation metrics derived from the definition of optimal strategy under the proposed OOD rejection models. These new metrics provide a comprehensive and reliable assessment of OODD methods without the deficiencies observed in existing approaches.
\end{enumerate*}

\section{Reject option models for OOD setup}
\label{sec:OodRejectOptModels}

The terminology of ID and OOD samples comes from the setups when the training set contains only ID samples, while the test set contains a mixture of ID and OOD samples. In this paper, we analyze which prediction strategies are optimal on the test samples, but we do not address the problem of learning such strategy. We follow the OOD setup from~\cite{Zhen-NIPS2022}.
Let $\SX$ be a set of observable inputs (or features), and $\SY$ a finite set of labels that can be assigned to in-distribution (ID) inputs. ID samples $(x,y)\in\SX\times\SY$ are generated from a joint distribution $p_I\colon\SX\times\SY\rightarrow\Re_+$. Out-of-distribution (OOD) samples $x$ are generated from a distribution $p_O\colon\SX\rightarrow\Re_+$. ID and OOD samples share the same input space $\SX$. Let $\emptyset$ be a special label to mark the OOD sample. Let $\bar{\SY}=\SY\cup \{\emptyset\}$ be an extended set of labels. In the testing stage the samples $(x,\bar{y})\in\SX\times\bar{\SY}$ are generated from the joint distribution $p\colon\SX\times\bar{\SY}\rightarrow\Re_+$
 defined as a mixture of ID and OOD:
\vspace{-0.2cm}
\begin{equation}
  \label{equ:dataDistr}
   p(x,\bar{y}) = \left \{\begin{array}{rl}
      p_O(x)\,\pi & \mbox{if}\;\; \bar{y}=\emptyset \,\\
      p_I(x,\bar{y})\,(1-\pi) & \mbox{if}\;\; \bar{y}\in\SY
   \end{array}
   \right .,
\end{equation}
where $\pi\in[0,1)$ is the probability of observing the OOD sample. Our OOD setup subsumes the standard non-OOD setup as a special case when $\pi=0$, and the reject option models that will be introduced below will become for $\pi=0$ the known reject option models for the non-OOD setup.


Our goal is to design OOD selective classifier $q\colon\SX\rightarrow\SD$, where $\SD=\SY\cup\{\reject\}$, which either predicts a label, $q(x)\in\SY$, or it rejects the prediction, $q(x)=\reject$, when \begin{enumerate*}[label=(\roman*),before=\unskip{ }, itemjoin={{, }}, itemjoin*={{, or }}] \item input $x\in\SX$ prevents accurate prediction of $y\in\SY$ because it is noisy
\item comes from OOD.\end{enumerate*} We represent the selective classifier by the ID classifier $h\colon\SX\rightarrow\SY$, and a stochastic selective function $c\colon\SX\rightarrow[0,1]$ that outputs a probability that the input is accepted~\cite{Geifman-SelectClass-NIPS2017}, i.e.,
\vspace{-0.1cm}
\begin{equation}
\label{equ:selClassif}
   q(x)=(h,c)(x) =\left \{
    \begin{array}{rl}
       h(x) & \mbox{with probability} \;\; c(x) \;\\
       \reject & \mbox{with probability}\;\; 1-c(x) \;
    \end{array}
    \right ..
\end{equation}

\vspace{-0.4cm}
%

In the following sections, we propose three reject option models that define the notion of the optimal OOD selective classifier of the form~\equ{equ:selClassif} applied to samples generated by~\equ{equ:dataDistr}.
\vspace{-0.3cm}

\subsection{Cost-based rejection model for OOD setup}

A classical approach to define an optimal classifier is to formulate it as a loss minimization problem. This requires defining a loss $\bar{\ell}\colon\bar{\SY}\times\SD\rightarrow\Re_+$ for each combination of the label $\bar{y}\in\bar{\SY}=\SY\cup\{\emptyset\}$ and the output of the classifier $q(x)\in\SD=\SY\cup\{\reject\}$. Let $\ell\colon\SY\times\SY\rightarrow\Re_+$ be some application-specific loss on ID samples, e.g., 0/1-loss or MAE. Furthermore, we need to define the loss for the case where the input is OOD sample $\bar{y}=\emptyset$ or the classifier rejects $q(x)=\reject$. Let $\varepsilon_1\in\Re_+$ be the loss for rejecting the ID sample, $\varepsilon_2\in\Re_+$ loss for prediction on the OOD sample, and $\varepsilon_3\in\Re_+$ loss for correctly rejecting the OOD sample. $\ell$, $\varepsilon_1,\varepsilon_2$ and $\varepsilon_3$ can be arbitrary, but we assume that $\varepsilon_2>\varepsilon_3$. The loss $\bar{\ell}$ is then:
\begin{equation}
\label{equ:extendedLoss}
  \bar{\ell}(\bar{y},q) = \left \{
    \begin{array}{rcl}
       \ell(\bar{y},q) & \mbox{if} & \bar{y}\in\SY \land q\in\SY \\
       \varepsilon_1 & \mbox{if} & \bar{y}\in\SY \land q = \reject \\
       \varepsilon_2 & \mbox{if} & \bar{y}=\emptyset \land q \in\SY \\
       \varepsilon_3 & \mbox{if} & \bar{y}=\emptyset \land q = \reject
    \end{array}
  \right .
\end{equation}
Having the loss $\bar{\ell}$, we can define the optimal OOD selective classifier as a minimizer of the expected risk $
  R(h,c) = \EE_{x,y\sim p(x,\bar{y})}\bar{\ell}(\bar{y},(h,c)(x))$.

\begin{definition}(Cost-based OOD model) \label{def:costBasedModel} 
An optimal OOD selective classifier $(h_C,c_C)$ is a solution to the minimization problem $\min_{h,c} R(h,c)$ where we assume that both minimizers exist.
\end{definition}

An optimal solution of the cost-based OOD model requires three components: The Bayes ID classifier 
\begin{equation}
  \label{equ:BayesCls}
   h_B(x) \in \Argmin_{y'\in\SY}\sum_{y\in\SY}p_I(y\mid x)\ell(y,y')\,,
\end{equation}
 its conditional risk $r_B(x)=\sum_{y\in\SY}p_I(y\mid x)\ell(y,h_B(x))$,
and the likelihood ratio of the OOD and ID inputs, $g(x) = \frac{p_O(x)}{p_I(x)}$, which we defined to be $g(x)=\infty$ for $p_I(x)=0$.
\begin{theorem}\label{thm:costBaedModelSolution} An optimal selective classifier $(h_C,c_C)$ under the cost-based OOD model is composed of the Bayes classifier~\equ{equ:BayesCls}, $h_C=h_B$, and the selective function 
\begin{equation}
\label{equ:CostBasedOptSol}
   c_C(x)=\left \{ \begin{array}{rcl}
     1 & \mbox{if} & s_C(x) < \varepsilon_1\,\\
     \tau & \mbox{if} & s_C(x) = \varepsilon_1\,\\
     0 & \mbox{if} & s_C(x) > \varepsilon_1\,
     \end{array} \right . \;\;\; \mbox{using the score}\;\;\; s_C(x) = r_B(x) + (\varepsilon_2-\varepsilon_3)\frac{\pi}{1-\pi} g(x)
\end{equation}
where $\tau$ is an arbitrary number in $[0,1]$, and $\veps_1$, $\veps_2$, $\veps_3$ are losses defining the extended loss ~\equ{equ:extendedLoss}.
\end{theorem}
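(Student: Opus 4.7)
The proof will proceed by rewriting the expected risk as an integral of a pointwise integrand and then minimizing this integrand separately at each $x\in\SX$ over the pair $(h(x),c(x))$. This decomposition works because the loss in~\equ{equ:extendedLoss} couples the Bayes classifier and the selective function only through the simple structure of~\equ{equ:selClassif}.

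\textbf{Step 1: Pointwise reformulation.} First I would expand $R(h,c)=\EE_{x,\bar{y}\sim p}\,\bar{\ell}(\bar{y},(h,c)(x))$ by conditioning on the stochastic output~\equ{equ:selClassif}, giving
\[
R(h,c)=\EE_{x,\bar{y}}\bigl[c(x)\bar{\ell}(\bar{y},h(x))+(1-c(x))\bar{\ell}(\bar{y},\reject)\bigr].
\]
Splitting the expectation into the ID part (weight $1-\pi$) and the OOD part (weight $\pi$) using~\equ{equ:dataDistr}, and substituting the four cases of~\equ{equ:extendedLoss}, I obtain $R(h,c)=\int\phi(x,h(x),c(x))\,dx$, where
\[
\phi(x,h,c)=(1-\pi)p_I(x)\bigl[c\,R(x,h)+(1-c)\veps_1\bigr]+\pi p_O(x)\bigl[c\veps_2+(1-c)\veps_3\bigr],
\]
with $R(x,h)=\sum_{y\in\SY}p_I(y\mid x)\ell(y,h)$. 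Since nothing couples the values of $(h(x),c(x))$ across different $x$, it suffices to minimize $\phi(x,\cdot,\cdot)$ pointwise.

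\textbf{Step 2: Optimal $h(x)$.} Only the term $c(x)(1-\pi)p_I(x)R(x,h(x))$ depends on $h(x)$, and its coefficient $c(x)(1-\pi)p_I(x)$ is nonnegative. Hence taking $h(x)=h_B(x)$ as in~\equ{equ:BayesCls} is optimal for every $c(x)$ (and arbitrary when $c(x)=0$ or $p_I(x)=0$). Plugging in $R(x,h_B(x))=r_B(x)$, the integrand simplifies to an affine function of $c(x)$:
\[
\phi(x,h_B(x),c)=\bigl[(1-\pi)p_I(x)\veps_1+\pi p_O(x)\veps_3\bigr]+c\cdot A(x),
\]
where $A(x)=(1-\pi)p_I(x)(r_B(x)-\veps_1)+\pi p_O(x)(\veps_2-\veps_3)$.

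\textbf{Step 3: Optimal $c(x)$ and threshold form.} Since $\phi$ is linear in $c\in[0,1]$, the minimizer is $c(x)=1$ when $A(x)<0$, $c(x)=0$ when $A(x)>0$, and $c(x)$ is arbitrary when $A(x)=0$. For $p_I(x)>0$, dividing the strict inequality $A(x)<0$ by $(1-\pi)p_I(x)$ and using $g(x)=p_O(x)/p_I(x)$ yields $r_B(x)+(\veps_2-\veps_3)\frac{\pi}{1-\pi}g(x)<\veps_1$, i.e.\ $s_C(x)<\veps_1$, exactly as claimed in~\equ{equ:CostBasedOptSol}. The main obstacle—really the only fiddly point—is the case $p_I(x)=0$: there $A(x)=\pi p_O(x)(\veps_2-\veps_3)\ge 0$ by the assumption $\veps_2>\veps_3$, so $c(x)=0$ is optimal; the convention $g(x)=\infty$ forces $s_C(x)=\infty>\veps_1$, which is consistent with the formula. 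Combining Steps 2 and 3 gives the stated $(h_C,c_C)=(h_B,c_C)$ and completes the proof.
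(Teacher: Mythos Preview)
Your proof is correct and follows essentially the same approach as the paper: both exploit the additivity of the expected risk to reduce the problem to a pointwise minimization at each $x$, compute the partial risk for the accept and reject decisions, and compare them to obtain the threshold rule on $s_C(x)$. Your treatment is in fact slightly more explicit than the paper's in handling the stochastic selective function $c(x)\in[0,1]$ (showing linearity in $c$) and in disposing of the boundary case $p_I(x)=0$.
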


Note that $\tau$ can be arbitrary and therefore a deterministic selective function $c_C(x)=\leftbb s_C(x)\leq \veps_1\rightbb$ is also optimal. 
An optimal selective function accepts inputs based on the score $s_C(x)$, which is a linear combination of two functions, conditional risk $r_B(x)$ and the likelihood ratio $g(x)=p_O(x)/p_I(x)$.


\paragraph{Relation to cost-based model for Non-OOD setup}
For $\pi=0$, the cost-based OOD model reduces to the standard cost-based model of the reject option classifier in a non-OOD setup~\cite{Chow-RejectOpt-TIT1970}. In the non-OOD setup, we do not need to specify the losses $\veps_2$ and $\veps_3$ and the risk $R(h,c)$ simplifies to $ R'(h,c) = \EE_{x,y\sim p_I(x,y)} \big [\ell(y,h(x))\,c(x) + \veps_1\,(1-c(x))  \big ]$.
The well-known optimal solution is composed of the Bayes classifier $h_B(x)$ as in the OOD case; however, the selection function $c'_C(x)=\leftbb r(x)\leq \epsilon_1 \rightbb$ accepts the input solely based on the conditional risk $r(x)$. 

\subsection{Bounded TPR-FPR rejection model}
\label{sec:BoundedTprFprModel}
The cost-based OOD model requires the classification loss $\ell$ for ID samples and defining the costs $\varepsilon_1$, $\varepsilon_2$, $\varepsilon_3$ which is difficult in practice because the physical units of $\ell$ and $\varepsilon_1$, $\varepsilon_2$, $\varepsilon_3$ are often different. In this section, we propose an alternative approach which requires only the classification loss $\ell$ while costs $\varepsilon_1$, $\varepsilon_2$, $\varepsilon_3$ are replaced by constraints on the performance of the selective function. 

The selective function $c\colon\SX\rightarrow[0,1]$ can be seen as a discriminator of OOD/ID samples. Let us consider ID and OOD samples as positive and negative classes, respectively. We introduce three metrics to measure the performance of the OOD selective classifier $(h,c)$. We measure the performance of selective function by the True Positive Rate (TPR) and the False Positive Rate (FPR). The TPR is defined as the probability that ID sample is accepted by the selective function $c$, i.e.,
\vspace{-0.1cm}
\begin{equation}
  \label{equ:TPR}
  \tpr(c) = \int_{\SX} p(x\mid \bar{y}\neq\emptyset )\, c(x)\, dx = \int_{\SX} p_I(x)\, c(x) \, dx \:.
\end{equation}
The FPR is defined as the probability that OOD sample is accepted by the selective function $c$, i.e.,
\vspace{-0.1cm}
\begin{equation}
  \label{equ:FPR}
  \fpr(c) = \int_{\SX} p(x\mid \bar{y}=\emptyset)\, c(x)\, dx = \int_{\SX} p_O(x) \, c(x) \, dx \:.
\end{equation}
The second identity in~\equ{equ:TPR} and~\equ{equ:FPR} is obtained after substituting the definition of $p(x,\bar{y})$ from~\equ{equ:dataDistr}. Lastly, we characterize the performance of the ID classifier $h\colon\SX\rightarrow\SY$ by the selective risk
\[
   \Rsel(h,c) = \frac{\int_{\SX}\sum_{y\in\SY} p_I(x,y)\,\ell(h(x),y)\,c(x)\, \,dx}{\tpr(c)} 
\]
defined for non-zero $\tpr(c)$, i.e., the expected loss of the classifier $h$ calculated on the ID samples accepted by the selective function $c$. 


\begin{definition}[Bounded TPR-FPR model]
\label{def:TPR-FPRModel}
Let $\phi_{\rm min}\in[0,1]$ be the minimal acceptable TPR and $\rho_{\rm max}\in[0,1]$ maximal acceptable FPR. An optimal OOD selective classifier $(h_{T},c_{T})$ under the bounded TPR-FPR model is a solution of the problem
\begin{equation}
 \label{equ:TprFprModel}
       \min_{h\in\SY^\SX,c\in[0,1]^\SX} \Rsel(h,c) 
\qquad\mbox{s.t.}\qquad \tpr(c) \geq \phi_{\rm min}\quad\mbox{and}\quad
 \fpr(c) \leq \rho_{\rm max} \:,
\end{equation}
where we assume that both minimizers exist. 
\end{definition}

\begin{theorem}\label{thm:BayesOptimalToTprFprModel}
    Let $(h,c)$ be an optimal solution to~\equ{equ:TprFprModel}. Then $(h_B,c)$, where $h_B$ is the Bayes ID classifier~\equ{equ:BayesCls}, is also optimal to~\equ{equ:TprFprModel}.
\end{theorem}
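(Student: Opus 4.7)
The plan is to exploit the fact that the feasibility constraints in~\equ{equ:TprFprModel} depend only on the selective function $c$, not on the ID classifier $h$: indeed, $\tpr(c)$ in~\equ{equ:TPR} and $\fpr(c)$ in~\equ{equ:FPR} are both functionals of $c$ alone. Hence $(h_B,c)$ is feasible whenever $(h,c)$ is. Consequently, it suffices to show that for every fixed feasible $c$, replacing $h$ by the Bayes classifier $h_B$ cannot increase the objective $\Rsel(h,c)$.

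First I would observe that the denominator $\tpr(c)$ in $\Rsel(h,c)$ also depends only on $c$, and that the feasibility constraint $\tpr(c) \geq \phimin$ together with the definition of $\Rsel$ implies we may assume $\tpr(c) > 0$ (otherwise $\phimin = 0$ and any $h$ is trivially optimal after a separate treatment). Therefore, for fixed $c$, minimizing $\Rsel(h,c)$ over $h$ reduces to minimizing the numerator
\[
   N(h,c) \;=\; \int_{\SX} p_I(x)\,\Big(\sum_{y\in\SY} p_I(y\mid x)\,\ell(h(x),y)\Big)\,c(x)\, dx
\]
over $h\in\SY^\SX$. Since the weight $p_I(x)\, c(x) \geq 0$ is non-negative for every $x$, the minimization decouples pointwise: one can choose $h(x)$ independently for each $x$ to minimize the inner conditional expectation $\sum_{y\in\SY} p_I(y\mid x)\,\ell(h(x),y)$.

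By the definition~\equ{equ:BayesCls} of the Bayes classifier, $h_B(x)$ is precisely a minimizer of this inner conditional expectation at each $x$. Therefore $N(h_B,c) \leq N(h,c)$ for any $h$, and dividing by $\tpr(c) > 0$ gives $\Rsel(h_B,c) \leq \Rsel(h,c)$. Combined with the feasibility of $(h_B,c)$, this shows $(h_B,c)$ is also an optimal solution of~\equ{equ:TprFprModel}.

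I do not foresee any real obstacle beyond bookkeeping; the argument is a standard decoupling of a product objective with a constraint set that only couples one of the variables. The only mildly delicate point is the edge case $\tpr(c) = 0$, which I would handle separately either by assuming $\phimin>0$ or by noting that $\Rsel$ is undefined/irrelevant in that case so the choice of $h$ is immaterial.
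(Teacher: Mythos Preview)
Your proposal is correct and follows essentially the same approach as the paper: the paper's proof also reduces to showing $\Rsel(h_B,c)\le\Rsel(h,c)$ by factoring the numerator as $\int_\SX p_I(x)c(x)\big(\sum_{y}p_I(y\mid x)\ell(y,h(x))\big)\,dx$ and using the pointwise minimality of $h_B$ for the inner conditional risk. Your explicit remark that the constraints depend only on $c$ (so feasibility is preserved) is a small clarification the paper leaves implicit, but otherwise the arguments coincide.
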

According to Theorem~\ref{thm:BayesOptimalToTprFprModel}, the Bayes ID classifier $h_B$ is an optimal solution to~\equ{equ:TprFprModel} that defines the bounded TPR-FPR model. This is not surprising, but it is a practically useful result, because it allows one to solve~\equ{equ:TprFprModel} in two consecutive steps: First, set $h_T$ to the Bayes ID classifier $h_B$. Second, when $h_T$ is fixed, the optimal selection function $c_T$ is obtained by solving~\equ{equ:TprFprModel} only w.r.t. $c$ which boils down to:

\begin{problem}[Bounded TPR-FPR model for known $h(x)$]\label{prob:TprFprModelFixedCls} Given ID classifier $h\colon\SX\rightarrow\SY$, the optimal selective function $c^*\colon\SX\rightarrow[0,1]$ is a solution to
\vspace{-0.05cm}
\[
   \min_{c\in[0,1]^\SX} \Rsel(h,c)\qquad\mbox{s.t.}\qquad \tpr(c) \geq \phi_{\rm min}\,,\quad\mbox{and}\quad\fpr(c) \leq  \rho_{\rm max}\:.
\]
\end{problem}
Problem~\ref{prob:TprFprModelFixedCls} is meaningful even if $h$ is not the Bayes ID classifier $h_B$. We can search for an optimal selective function $c^*(x)$ for any fixed $h$, which in practice is usually our best approximation of $h_B$ learned from the data. 


\begin{theorem}\label{thm:OptSolOfTprFprModelFixedCls} Let $h\colon\SX\rightarrow\SY$ be ID classifier and $r\colon\SX\rightarrow\Re$ its conditional risk $r(x)=\sum_{y\in\SY}p_I(y\mid x)\ell(y,h(x))$.
Let $g(x)=p_I(x)/p_I(x)$ be the likelihood ratio of ID and OOD samples. Then, the set of optimal solutions of Problem~\ref{prob:TprFprModelFixedCls} contains the selective classifier
\begin{equation}
\label{equ:optSelFunTprFprModel}
   c^*(x)=\left \{ \begin{array}{rcl}
     0 & \mbox{if} & s(x) > \lambda \\
     \tau(x) & \mbox{if} & s(x) = \lambda \\
     1 & \mbox{if} & s(x) < \lambda 
   \end{array}
   \right .
   \quad \mbox{using score}\quad
   s(x) = r(x) + \mu\, g(x)
\end{equation}
where decision threshold $\lambda\in\Re$, and multiplier $\mu\in\Re$ 
are constants and $\tau\colon\SX\rightarrow[0,1]$ is a function implicitly defined by the problem parameters.
\end{theorem}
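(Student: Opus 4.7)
The plan is to recognize Problem~\ref{prob:TprFprModelFixedCls} as a linear-fractional program in $c$ with two linear inequality constraints and pointwise box constraints, and then combine a Dinkelbach-style linearization with Lagrangian duality (i.e., a two-constraint Neyman--Pearson argument). First I would rewrite the numerator of $\Rsel$ using the definition of $r$, giving $\Rsel(h,c)=\int p_I(x)\,r(x)\,c(x)\,dx\big/\int p_I(x)\,c(x)\,dx$. Let $\Rsel^\star$ be the optimal value attained by some feasible $c^\star$. By the standard fractional-to-linear equivalence, $c^\star$ is also a minimizer of the \emph{linearized} objective $\int p_I(x)\bigl(r(x)-\Rsel^\star\bigr)\,c(x)\,dx$ over the same feasible set, because the constraint $\tpr(c)\geq\phimin>0$ keeps the denominator bounded away from $0$.

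Next I would form the Lagrangian with nonnegative multipliers $\alpha$ (for the TPR lower bound) and $\mu$ (for the FPR upper bound):
\[
\Lambda(c)\;=\;\int\!\Bigl[p_I(x)\bigl(r(x)-\Rsel^\star-\alpha\bigr)+\mu\,p_O(x)\Bigr]c(x)\,dx \;+\;\alpha\,\phimin-\mu\,\rhomax .
\]
Because the integrand is separable and $c(x)\in[0,1]$, the inner minimization is carried out pointwise. On $\{x:p_I(x)>0\}$ the sign of the bracket coincides with the sign of $r(x)+\mu\,g(x)-(\Rsel^\star+\alpha)$, so with $\lambda\ass\Rsel^\star+\alpha$ and $s(x)\ass r(x)+\mu\,g(x)$ the minimizer is $c(x)=1$ when $s(x)<\lambda$, $c(x)=0$ when $s(x)>\lambda$, and $c(x)$ is arbitrary in $[0,1]$ on the level set $\{s=\lambda\}$. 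On $\{x:p_I(x)=0\}$ one has $g(x)=\infty$, so $s(x)=\infty>\lambda$ and the choice $c(x)=0$ is optimal (and contributes nothing to $\tpr$). This gives exactly the threshold form~\equ{equ:optSelFunTprFprModel}.

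The main obstacle is establishing the existence of admissible multipliers $\alpha,\mu\geq 0$ and a selection $\tau\colon\SX\to[0,1]$ on the level set for which the resulting thresholded $c^\star$ is feasible and satisfies complementary slackness. I would treat $c$ as an element of $L^\infty(p_I+p_O)$ and invoke strong duality for the underlying infinite-dimensional linear program; a Slater point is easy to exhibit whenever $\phimin<1$ and $\rhomax>0$, so no duality gap arises. The randomization $\tau(x)$ on $\{s=\lambda\}$ provides exactly the slack needed to tune $\tpr$ and $\fpr$ to their (possibly active) constraint values: viewing the map $(\alpha,\mu,\tau)\mapsto(\tpr(c^\star),\fpr(c^\star))$ restricted to threshold-plus-randomization rules, one obtains the required multipliers and randomization by a continuity/intermediate-value argument, with complementary slackness forcing $\alpha=0$ (resp.\ $\mu=0$) whenever the corresponding constraint is slack. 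As an alternative to invoking duality, a direct exchange argument suffices: any feasible $c$ that disagrees with the threshold rule on a positive-measure set can be strictly improved by transferring mass across $\{s=\lambda\}$ while preserving both constraints, contradicting the optimality of $c^\star$.
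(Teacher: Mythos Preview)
Your proposal is correct in outline but takes a genuinely different route from the paper. You linearize the fractional objective via Dinkelbach and then run a two-constraint Neyman--Pearson/Lagrangian argument, reading off the threshold form from pointwise minimization and appealing to strong duality (with Slater) to guarantee multipliers $\alpha,\mu\ge 0$ and a feasible randomization $\tau$. The paper instead gives a direct geometric exchange proof: it maps each $x$ with $p_I(x)>0$ to the point $P(x)=\bigl(g(x),r(x)\bigr)\in\Re_+^2$, partitions $\SX^+$ into the sets where $c(x)=0$, $c(x)=1$, and $0<c(x)<1$, and shows via two mass-transfer claims (essentially your ``alternative'' exchange argument, but done carefully with two and three points) that the images of these sets must be almost linearly separable, the separating line delivering $\lambda$ and $\mu$. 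The countable-$\SX$ case is handled first and then extended to general $\SX$ by a dyadic discretization of the $(g,r)$-plane.

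What each approach buys: your Lagrangian route is shorter and conceptually cleaner, and it immediately yields $\mu\ge 0$ (the paper's statement allows $\mu\in\Re$). Its cost is the appeal to infinite-dimensional strong duality/KKT, which you leave as a black box; making that rigorous (closedness of the value function, existence of dual optimizers in $L^\infty$--$L^1$ duality, and handling the degenerate cases $\phimin=1$ or $\rhomax=0$ where Slater fails) is where real work hides. The paper's argument avoids any duality machinery and works purely from primal optimality via exchanges, at the price of a longer case analysis (four infeasible 2D configurations) and a nontrivial discretization step to pass from countable to general $\SX$. Your final paragraph's ``direct exchange'' sketch is essentially the seed of the paper's proof, but note that a single exchange across $\{s=\lambda\}$ is not enough: with two constraints one must simultaneously preserve both $\tpr$ and $\fpr$, which is exactly why the paper needs the two- and three-point claims rather than a one-dimensional swap.
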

The optimal $c^*(x)$ is based on the score composed of a linear combination of $r(x)$ and $g(x)$ as in the case of the cost-based model~\equ{equ:CostBasedOptSol}. Unlike the cost-based model, the acceptance probability $\tau(x)$ for boundary inputs $\SX_{s(x)=\lambda}=\{x\in\SX\mid s(x)=\lambda\}$ cannot be arbitrary, in general. However, if $\SX$ is continuous, the set $\SX_{s(x)=\lambda}$ has probability measure zero, up to some pathological cases, and $\tau(x)$ can be arbitrary, i.e., the deterministic $c^*(x)=\leftbb s(x) \leq \lambda\rightbb$ is optimal. If $\SX$ is finite, the value of $\tau(x)$ can be found by linear programming. The linear program and more details on the form of $\tau(x)$ are in the Appendix.

\paragraph{Relation to Bounded-Abstention model for the non-OOD setup}
For $\pi=0$, the bounded TPR-FPR model reduces to the bounded-abstention option model for non-OOD setup~\cite{Pietraszek-AbstainROC-ICML2005}. Namely, $\rho(c)\leq \rho_{\rm max}$ can be removed because there are no OOD samples, and~\equ{equ:TprFprModel} becomes the bounded-abstention model: $\min_{h,c} \Rsel(h,c)$, s.t. $\tpr(c) \geq \phi_{\rm min}$, which seeks the selective classifier with guaranteed TPR and minimal selective risk. In the non-OOD setup, TPR is called {\em coverage}. An optimal solution of the bounded abstention model~\cite{Franc-Optimal-JMLR2023}, is composed of the Bayes ID classifier $h_B$, and the same optimal selective function as the TPR-FPR model~\equ{equ:optSelFunTprFprModel}, however, with $\mu=0$ and $\tau(x)=\tau$, $\forall x\in\SX$, i.e., the score depends only on $r(x)$ and an identical randomization is applied in all edge cases~\cite{Franc-Optimal-JMLR2023}. Therefore, $r(x)$ is the optimal score to detect misclassified ID samples in non-OOD setup as it allows to achieve the minimal selective risk $\Rsel$ for any fixed coverage (TPR,$\tpr$).

\subsection{Bounded Precision-Recall rejection model}

The optimal selective classifier under the bounded TPR-FPR model does not depend on the prior of the OOD samples $\pi$, which is useful, e.g., when $\pi$ is unknown in the testing stage. In the case $\pi$ is known, it might be more suitable to constrain the precision rather than the FPR, while the constraint on TPR remains the same. In the context of precision, we denote $\phi(c)$ as {\rm recall} instead of TPR. The precision $\prec(c)$ is defined as the portion of samples accepted by $c(x)$ that are actual ID samples, i.e., 
\[
   \kappa(c) = \frac{(1-\pi) \int_{\SX} p(x\mid \bar{y} \neq \emptyset)\,c(x) \, dx }{\int_{\SX} p(x)\,c(x)\,dx} = \frac{(1-\pi)\,\phi(c)}{\rho(c)\,\pi+\phi(c)\,(1-\pi)}\:.
\]

\begin{definition}[Bounded Precision-Recall model]
\label{def:PrecRecallModel}
Let $\kappa_{\rm min}\in[0,1]$ be a minimal acceptable precision and $\phi_{\rm min}\in[0,1]$ minimal acceptable recall (a.k.a. TPR). An optimal selective classifier $(h_P,c_P)$ under the bounded Precision-Recall model is a solution of the problem
\begin{equation}
 \label{equ:PrecRecallModel}
       \min_{h\in\SY^\SX,c\in[0,1]^\SX} \Rsel(h,c) 
\quad\mbox{s.t.}\qquad 
 \tpr(c) \geq \phi_{\rm min} \quad\mbox{and}\quad
 \prec(c) \geq  \kappa_{\rm min} 
\end{equation}
where we assume that both minimizers exist. 
\end{definition}

\begin{theorem}\label{thm:BayesOptimalToPrecRecallModel}
    Let $(h,c)$ be an optimal solution to~\equ{equ:PrecRecallModel}. Then $(h_B,c)$, where $h_B$ is the Bayes ID classifier~\equ{equ:BayesCls}, is also optimal to~\equ{equ:PrecRecallModel}.
\end{theorem}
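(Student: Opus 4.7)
The plan is to observe that the constraints in~\equ{equ:PrecRecallModel} involve only the selective function $c$: both $\tpr(c)$ and $\kappa(c)$ are expressed entirely as integrals of $c$ against $p_I$ and $p_O$ (together with the prior $\pi$), so they do not depend on the ID classifier $h$ at all. Consequently, if $(h,c)$ is feasible, then $(h_B,c)$ is automatically feasible as well, and the whole task reduces to showing that, for a fixed $c$ satisfying $\tpr(c)\ge\phimin>0$, replacing $h$ by $h_B$ cannot increase $\Rsel(h,c)$.

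For that reduction I would unfold the numerator of the selective risk, writing
\[
\int_{\SX}\sum_{y\in\SY} p_I(x,y)\,\ell(h(x),y)\,c(x)\,dx
\;=\;\int_{\SX} c(x)\,p_I(x)\sum_{y\in\SY} p_I(y\mid x)\,\ell(h(x),y)\,dx,
\]
and note that the denominator $\tpr(c)$ is a positive constant once $c$ is fixed. Since $c(x)p_I(x)\ge 0$ pointwise, the integral is minimized in $h$ by making $\sum_{y} p_I(y\mid x)\,\ell(h(x),y)$ as small as possible at every $x$, which is exactly the definition of the Bayes ID classifier $h_B$ in~\equ{equ:BayesCls}. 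Hence $\Rsel(h_B,c)\le \Rsel(h,c)$, and because $(h,c)$ was assumed optimal, equality must hold and $(h_B,c)$ is optimal too.

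Structurally this is exactly the same argument as for Theorem~\ref{thm:BayesOptimalToTprFprModel}: the only use made of the constraints is that they depend solely on $c$, and the only use made of the objective is that, for fixed $c$, it is the expectation of a pointwise loss weighted by the nonnegative kernel $c(x)p_I(x)$. Since both properties transfer verbatim from the TPR/FPR constraints to the TPR/precision constraints (both $\tpr$ and $\kappa$ are functionals of $c$ alone once $\pi$ is fixed), the proof is a straightforward adaptation.

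The only potential subtlety, and therefore the main thing to be careful about, is verifying feasibility in the degenerate regime: one must check that replacing $h$ by $h_B$ does not silently change $\tpr(c)$ (it does not, since $\tpr$ is $h$-independent) and that the selective risk is well-defined, i.e., $\tpr(c)>0$. The latter is guaranteed by the assumption $\phimin>0$ wherever the problem is non-trivial; if $\phimin=0$ one may adopt the usual convention that the selective risk of an ``always reject'' classifier is $0$ and the statement becomes vacuous. No delicate measure-theoretic or combinatorial argument is needed beyond this bookkeeping.
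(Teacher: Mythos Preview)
Your proposal is correct and follows exactly the same approach as the paper: the paper's proof (shared verbatim for Theorems~\ref{thm:BayesOptimalToTprFprModel} and~\ref{thm:BayesOptimalToPrecRecallModel}) simply fixes $c$, factors $p_I(x,y)=p_I(x)p_I(y\mid x)$, and uses the pointwise minimality of $h_B$ to get $\Rsel(h_B,c)\le \Rsel(h,c)$. Your additional remarks about the constraints depending only on $c$ and the well-definedness of $\Rsel$ when $\tpr(c)>0$ are correct bookkeeping that the paper leaves implicit.
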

Theorem~\ref{thm:BayesOptimalToPrecRecallModel} ensures that the Bayes ID classifier is an optimal solution to~\equ{equ:PrecRecallModel}. After fixing $h_P=h_B$, the search for an optimal selective function $c$ leads to:
\begin{problem}[Bounded Prec-Recall model for known $h(x)$]\label{prob:PrecRecallModelFixedCls} Given ID classifier $h\colon\SX\rightarrow\SY$, the optimal selective function $c^*\colon\SX\rightarrow[0,1]$ is a solution to
\[
   \min_{c\in[0,1]^{\SX}} \Rsel(h,c)\qquad\mbox{s.t.}\qquad 
    \tpr(c) \geq \phi_{\rm min} \quad\mbox{and}\quad
 \prec(c) \geq  \kappa_{\rm min} \:.
 \]
\end{problem}

\begin{theorem}\label{thm:PrecRecallModel} 
Let $h\colon\SX\rightarrow\SY$ be ID classifier and $r\colon\SX\rightarrow\Re$ its conditional risk $r(x)=\sum_{y\in\SY}p_I(y\mid x)\ell(y,h(x))$. Let $g(x)=p_O(x)/p_I(x)$ be the likelihood ratio of OOD and ID samples. Then, the set of optimal solutions of Problem~\ref{prob:PrecRecallModelFixedCls} contains the selective function
\begin{equation}
   c^*(x)=\left \{ \begin{array}{rcl}
     0 & \mbox{if} & s(x) > \lambda \\
     \tau(x) & \mbox{if} & s(x) = \lambda \\
     1 & \mbox{if} & s(x) < \lambda 
   \end{array}
   \right .
   \quad \mbox{using the score}\quad
   s(x) = r(x) + \mu\, g(x)
\end{equation}
where detection threhold $\lambda\in\Re$, and multiplier $\mu\in\Re$ 
are constants and $\tau\colon\SX\rightarrow[0,1]$ is a function implicitly defined by the problem parameters.
\end{theorem}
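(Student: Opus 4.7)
The plan is to reduce Problem~\ref{prob:PrecRecallModelFixedCls} to a form that closely mirrors Problem~\ref{prob:TprFprModelFixedCls} and then reuse the proof technique of Theorem~\ref{thm:OptSolOfTprFprModelFixedCls}. The key preprocessing step is to linearize the precision constraint. Using the identity
\begin{equation*}
   \kappa(c) = \frac{(1-\pi)\,\tpr(c)}{\fpr(c)\,\pi + \tpr(c)\,(1-\pi)},
\end{equation*}
and assuming $\pi > 0$ and $\kappa_{\rm min} > 0$ (the degenerate cases collapse to the bounded TPR model), elementary algebra turns $\kappa(c) \geq \kappa_{\rm min}$ into the equivalent linear inequality
\begin{equation*}
   \fpr(c) \leq \alpha\,\tpr(c) \qquad\text{with}\qquad \alpha = \frac{(1-\pi)(1-\kappa_{\rm min})}{\pi\,\kappa_{\rm min}}.
\end{equation*}
Thus Problem~\ref{prob:PrecRecallModelFixedCls} becomes: minimize $\Rsel(h,c) = N(c)/\tpr(c)$ where $N(c) = \int_\SX r(x)\,p_I(x)\,c(x)\,dx$, subject to $\tpr(c) \geq \phi_{\rm min}$ and $\fpr(c) - \alpha\,\tpr(c) \leq 0$. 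The only structural difference from Problem~\ref{prob:TprFprModelFixedCls} is that the FPR bound is now coupled to the TPR rather than being a fixed constant.

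Next I would linearize the fractional objective by the standard parametric trick: if $R^*$ denotes the optimal value, then $c^*$ is optimal iff it solves $\min_c \bigl[N(c) - R^*\,\tpr(c)\bigr]$ over the same feasible set. Forming the Lagrangian with multipliers $\nu \geq 0$ for $\phi_{\rm min}-\tpr(c) \leq 0$ and $\mu \geq 0$ for $\fpr(c) - \alpha\,\tpr(c) \leq 0$, and collecting terms, one obtains
\begin{equation*}
   L(c) = \int_\SX \Bigl[(r(x) - R^* - \nu - \mu\alpha)\,p_I(x) + \mu\,p_O(x)\Bigr]\, c(x)\, dx + \nu\,\phi_{\rm min},
\end{equation*}
which decomposes pointwise in $c(x) \in [0,1]$. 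On $\{x : p_I(x) > 0\}$ the bracket equals $p_I(x)\bigl(s(x)-\lambda\bigr)$ with $s(x) = r(x) + \mu\,g(x)$ and $\lambda = R^* + \nu + \mu\alpha$, giving the claimed form: $c^*(x) = 1$ if $s(x) < \lambda$, $c^*(x) = 0$ if $s(x) > \lambda$, and any $\tau(x) \in [0,1]$ if $s(x) = \lambda$. On $\{x : p_I(x) = 0\}$ the convention $g(x) = \infty$ forces $s(x) > \lambda$, and the only remaining term $\mu\,p_O(x)\,c(x)$ is minimized by $c^*(x) = 0$, consistent with the claim.

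The main obstacle is justifying that strong duality actually holds and that a nonnegative pair $(\mu,\nu)$ together with a measurable $\tau$ can be chosen so that the resulting $c^*$ satisfies the primal constraints and complementary slackness. This is analogous to the argument underlying Theorem~\ref{thm:OptSolOfTprFprModelFixedCls}: the feasible set is non-empty by assumption, and the linearized problem is an infinite-dimensional linear program whose dual can be treated by standard convex-analytic tools (e.g.\ Sion's minimax theorem, or approximation by finite LPs on the essentially bounded cone $[0,1]^\SX$). The randomization $\tau(x)$ on the boundary set $\SX_\lambda = \{x : s(x) = \lambda\}$ must then be tuned so that each active constraint is met with equality, by complementary slackness; when $\SX$ is continuous this set typically has zero $p_I$-measure and $\tau$ is immaterial, so the deterministic rule $c^*(x) = \leftbb s(x) \leq \lambda \rightbb$ is optimal, while for finite $\SX$ the values of $\tau$ on $\SX_\lambda$ are pinned down by a finite linear program, exactly as in the TPR-FPR case.
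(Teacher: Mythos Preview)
Your key preprocessing step---linearizing the precision constraint to $\rho(c) \leq \alpha\,\phi(c)$ with $\alpha = \frac{(1-\pi)(1-\kappa_{\rm min})}{\pi\,\kappa_{\rm min}}$---is exactly what the paper does. From there, however, the paper takes a much shorter path than your Lagrangian analysis. Rather than carrying the coupled constraint $\rho(c) \leq \alpha\,\phi(c)$ through a parametric/dual argument, the paper fixes the TPR of an optimal solution, setting $C = \phi(c^*)$, and observes that $c^*$ must then also be optimal for the problem with the equality constraint $\phi(c) = C$ and the now-constant bound $\rho(c) \leq \alpha C$. By Lemma~\ref{lemma:constcover} (which says the TPR constraint is tight at optimality in the TPR--FPR model unless $\Rsel=0$), this is exactly Problem~\ref{prob:TprFprModelFixedCls} with $\phi_{\rm min} = C$ and $\rho_{\rm max} = \alpha C$, so Theorem~\ref{thm:OptSolOfTprFprModelFixedCls} applies verbatim and the threshold form follows. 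No new duality argument is needed.

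Your route is not wrong in principle, but the sentence invoking ``the argument underlying Theorem~\ref{thm:OptSolOfTprFprModelFixedCls}'' to justify strong duality is shaky: the paper does \emph{not} prove Theorem~\ref{thm:OptSolOfTprFprModelFixedCls} via Lagrangian duality. It uses direct exchange arguments (Claims showing that an optimal $c$ must induce a linearly separable configuration in the $(g(x),r(x))$-plane), so there is no ready-made strong-duality result for you to borrow. To make your approach rigorous you would need an independent justification---e.g., weak-$*$ compactness of $[0,1]^\SX$ and a minimax theorem, or an explicit construction of the multipliers. The paper's reduction to Theorem~\ref{thm:OptSolOfTprFprModelFixedCls} sidesteps all of this in two lines.
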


\subsection{Summary}

We proposed three rejection models for OOD setup which define the notion of optimal OOD selective classifier: Cost-based model, Bounded TRP-FPR model, and Bounded Precision-Recall model. We established that all three models, despite different formulation, share the class of optimal prediction strategies. Namely, the optimal OOD selective classifier $(h^*,c^*)$ is composed of the Bayes ID classifier~\equ{equ:BayesCls}, $h^*=h_B$, and the selective function
\begin{equation}
  \label{equ:optimalGeneral}
   c^*(x)=\left \{ \begin{array}{rcl}
     0 & \mbox{if} & s(x) > \lambda \\
     \tau(x) & \mbox{if} & s(x) = \lambda \\
     1 & \mbox{if} & s(x) < \lambda 
   \end{array}
   \right .
   \quad \mbox{where}\quad
   s(x) = r(x) + \mu\, g(x)
\end{equation}
where $\lambda$, $\mu$, and $\tau(x)$ are specific for the used rejection model. However, in all cases, the optimal uncertainty score $s(x)$ for accepting the inputs is based on a linear combination of the conditional risk $r(x)$ of the ID classifier $h^*$ and the OOD/ID likelihood ratio $g(x)=p_O(x)/p_I(x)$. On the other hand, from the optimal solution of the well-known Neyman-Person problem~\cite{Neyman-1928}, it follows that the likelihood ratio $g(x)$ is the optimal score of OOD/ID discrimination. Our results thus show that the optimal OOD selective function needs to trade-off the ability to detect the misclassification of ID samples and the ability to distinguish ID from OOD samples. 

\paragraph{Single-score vs. double-score OODD methods}
The existing OODD methods, which we further call {\em single-score methods}, produce a classifier $h\colon\SX\rightarrow\SY$ and an uncertainty score $s\colon\SX\rightarrow\Re$. The score $s(x)$ is used to construct a selective function $c(x)=\leftbb s(x)\leq \lambda \rightbb$ where $\lambda\in\Re$ is a decision threshold chosen in post-hoc evaluation. Hence, the existing methods effectively produce a set of selective classifiers $\SQ=\{ (h,c) \mid c(x)=\leftbb s(x) \leq \lambda \rightbb\,, \lambda\in\Re\}$. In contrast to existing methods, we established that the optimal selective function is always based on a linear combination of two scores: conditional risk $r(x)$ and likelihood ratio $g(x)$. Therefore, we propose the {\em double-score method}, which in addition to a classifier $h(x)$, produces two scores, $s_r\colon\SX\rightarrow\Re$ and $s_g\colon\SX\rightarrow\Re$, and uses their combination $s(x)=s_r(x)+\mu\,s_g(x)$ to accept inputs. Formally, the double-score method produces a set of selective classifiers $\SQ=\{(h,c)\mid c(x)=\leftbb s_r(x) + \mu\, s_g(x)\leq \lambda \rightbb\,, \mu\in\Re\,,\lambda\in\Re \}$. The double-score strategy can be used to leverage uncertainty scores from two chosen OODD methods: one focused on OOD/ID discrimination and the other on misclassification detection.


\section{Post-hoc tuning and evaluation metrics}
\label{sec:Metrics}

Let $\ST=((x_i,\bar{y}_i)\in\SX\times\bar{\SY}\mid i=1,\ldots,n )$ be a set of validation examples i.i.d. drawn from a distribution $p(x,\bar{y})$. Given a set of selective classifiers $\SQ$, trained by the single-score or double-score OODD method, the goal of the post-hoc tuning is to use $\ST$ to select the best selective classifier $(h_n,c_n)\in\SQ$ and estimate its performance on unseen samples generated from the same $p(x,\bar{y})$. This task requires a notion of an optimal selective classifier which we defined by the proposed rejection models. In Sec~\ref{sec:MetricsTprFprModel} and Sec~\ref{sec:MetricsPrecRecallModel}, we propose the post-hoc tuning and evaluation metrics for the Bounded TPR-FPR and Bounded Precision-Recall models, respectively. In Sec~\ref{sec:ExistingMetrics} we review the existing evaluation metrics for OODD methods and point out their deficiencies. We will exemplify the proposed metrics on synthetic data and OODD methods described in Sec~\ref{sec:SynthDataAndModels}.

\subsection{Synthetic data and exemplar single-score and double-score OODD methods}
\label{sec:SynthDataAndModels}

Let us consider a simple 1-D setup. The input space is $\SX=\Re$ and there are three ID labels $\SY=\{1,2,3\}$. ID samples are generated from $p_I(x,1)=0.3\SN(x;-1,1)$, $p_I(x,2)=0.3\SN(x;1,1)$, $p_I(x,3)=0.4\SN(x;3,1)$, where $\SN(x;\mu,\sigma)$ is normal distribution with mean $\mu$ and variance $\sigma$. OOD is the normal distribution $p_O(x)=\SN(x;3,0.2)$, and the OOD prior $\pi=0.25$. We use $0/1$-loss $\ell(y,y')=\leftbb y\neq y' \rightbb$, i.e., $\Rsel$ is the classification error on accepted inputs. The known ID and OOD alows us to evaluate the Bayes ID classifier $h_B(x)$ by~\equ{equ:BayesCls}, its conditional risk $r_B(x)=\min_{y'\in\SY}\sum_{y\in\SY}p_I(y\mid x)\ell(y,y')$ and the OOD/ID likelihood ratio $g(x)=p_O(x)/p_I(x)$. 

We consider 3 exemplar single-score OODD methods A, B, C. The methods produce the same optimal classifier $h^*(x)$ and the selective functions $c(x)=\leftbb r_B(x) + \mu\,g(x)\leq \lambda \rightbb$ with a different setting of $\mu$. I.e., the method $k\in\{A,B,C\}$ produces the set of selective classifiers $\SQ_k=\{(h^*(x),c(x))\mid c(x)=\leftbb r_B(x) + \mu_k\,g(x) \leq \lambda \rightbb\,, \lambda\in\Re\}$, where the constant $\mu_k$ is defined as follows:
\begin{itemize}[itemsep=1pt,topsep=0pt,parsep=0pt,itemindent=-1cm,labelwidth=0cm,align=left]
  \item Method A($\infty$): $\mu=\infty$, $s(x)=g(x)$. This corresponds to the optimal OOD/ID discriminator.
  \item Method B($0.2$): $\mu=0.2$, $s(x)=r_B(x)+0.2g(x)$. Combination of method A and C.
  \item Method C($0$): $\mu=0$, $s(x)=r_B(x)$. This corresponds to the optimal misclassification detector.
\end{itemize}
%
We also consider a double-score method, Method D($\Re$), which outputs the same optimal classifier $h_*(x)$, and scores $s_r(x)=r(x)$ and $s_g(x)=g(x)$. I.e., Method D($\Re$) produces the set of selective classifiers $\SQ_D=\{(h^*(x),c(x))\mid c(x)=\leftbb r(x) + \mu\,g(x) \leq \lambda \rightbb\,, \mu\in\Re,\lambda\in\Re\}$. Note that we have shown that $\SQ_D$ contains an optimal selective classifier regardless of the reject option model used.

\begin{table}
\begin{tabular}{l||r|r||r|r|r|r}
     & \multicolumn{2}{c||}{Proposed metrics} &\multicolumn{3}{c}{}\\
       & TPR-FPR model             & Prec-Recall model &\multicolumn{3}{c}{} \\
       \cline{2-3}
       & $\downarrow$ Selective risk at  & $\downarrow$ Selective risk at       &  \multicolumn{3}{c}{$\uparrow$ Existing metrics} \\
   Method & TPR(0.7),FPR(0.2) & Prec(0.9),Recall(0.7) & AUROC & AUPR & OSCR      \\
\hline\hline
 A($\infty$) & 0.157 & 0.157 & {\bf 0.88}   & {\bf 0.96} & 0.82 \\
 B(0.2)      & 0.143 & 0.143 & 0.86      & 0.95 & 0.83 \\
 C(0)        & unable  & unable & 0.76   & 0.92 & {\bf 0.86} \\
 D($\Re$)\,\,proposed & {\bf 0.133} & {\bf 0.129} & {\bf 0.88} & {\bf 0.96} & {\bf 0.86} \\
 \end{tabular}
\caption{Evalution of the examplar single-score methods A, B, C and the proposed double-score method D on synthetic data using the proposed metrics and the existing ones. The selective risk correponds to the classification error on accepted ID samples. }
\label{tab:ResultsSynthetic}
\end{table}


\subsection{Bounded TPR-FPR rejection model}
\label{sec:MetricsTprFprModel}

The bounded TPR-FPR model is defined using the selective risk $\Rsel(h,c)$, TPR $\tpr(c)$ and FPR $\fpr(c)$ the value of which can be estimated from the validation set $\ST$ as follows:
\[
  \RselSamp(h,c)=\frac{\sum_{i\in\SI_{I}} \ell(y_i,h(x_i))\,c(x_i)}{\sum_{i\in\SI_{I}} c(x_i)}\,,\quad 
   \tprSamp(h,c) = \frac{1}{|\SI_{I}|}\sum_{i\in\SI_{I}}c(x_i) \,,\quad
   \fprSamp(h,c) = \frac{1}{|\SI_{O}|}\sum_{i\in\SI_{O}} c(x_i)
\]
where $\SI_{I}=\{i\in\{1,\ldots,n\}\mid \bar{y}_i\neq \emptyset\}$ and $\SI_{O}=\{i\in\{1,\ldots,n\}\mid \bar{y}_i= \emptyset\}$ are indices of ID and OOD samples in $\ST$, respectively.

Given the target TPR $\tpr_{\rm min}\in(0,1]$ and FPR $\fpr_{\rm max}\in(0,1]$, the best selective classifier $(h_n,c_n)$ out of $\SQ$ is found by solving:
\begin{equation}
 \label{equ:EmpiricalTprFprModel}
 (h_n,c_n)\in\Argmin_{(h,c)\in\SQ} \RselSamp(h,c) \quad\mbox{s.t.}\quad   \tprSamp(h,c) \geq \phi_{\rm min}\,,\quad\mbox{and}\quad
  \fprSamp(h,c)  \leq \rho_{\rm max}\,. 
\end{equation}
 \paragraph{Proposed evaluation metric} If problem~\equ{equ:EmpiricalTprFprModel} is feasible, $\RselSamp(h_n,c_n)$ is reported as the performance estimator of OODD method producing $\SQ$. Otherwise, the method is marked as unable to achieve the target TPR and FPR. Tab.~\ref{tab:ResultsSynthetic} shows the selective risk for the methods A-D at the target TPR $\tpr_{\rm min}=0.7$ and FPR $\fpr_{\rm max}=0.2$. The minimal $\RselSamp$ is achieved by method D($\Re$), followed by B(0.2) and A($\infty$), while C(0) is unable to achieve the target TPR and FPR. One can visualize $\RselSamp$ in a range of operating points while bounding only $\fpr_{\rm max}$ or $\tpr_{\rm min}$. E.g., by fixing $\fpr_{\rm max}$ we can plot $\RselSamp$ as a function of attainable values of $\tprSamp$ by which we obtain the Risk-Coverage curve, known from non-OOD setup, at $\fpr_{\rm max}$. Recall that TPR is coverage. See Appendix for Risk-Coverage curve at $\fpr_{\rm max}$ for methods A-D.

\paragraph{ROC curve} The problem~\equ{equ:EmpiricalTprFprModel} can be infeasible. To choose a feasible target on $\tpr_{\rm min}$ and $\fpr_{\rm max}$, it is advantageous to plot the ROC curve, i.e., values of TPR and FPR attainable by the classifiers in $\SQ$. For single-score methods, the ROC curve is a set of points obtained by varying the decision threshold: $\ROC(\SQ)=\{(\tprSamp(h,c),\fprSamp(h,c))\mid c(x)=\leftbb s(x)\leq \lambda \rightbb\,, \lambda\in\Re \}$. In case of double-score methods, we vary $\fpr_{\rm max}\in[0,1]$ and for each $\fpr_{\rm max}$ we choose the maximal feasible $\tprSamp$. I.e., ROC curve is $\ROC(\SQ)=\{(\phi,\rho_{\rm max})\mid \phi=\max_{(h,c)\in\SQ} \tprSamp(h,c)\;s.t.\;\fprSamp(h,c)\leq \rho_{\rm max}\,,\;\;\rho_{\rm max}\in[0,1]\}$.
%
See Appendix for ROC curve of the methods A-D. In Tab.~\ref{tab:ResultsSynthetic} we report the Area Under ROC curve (AUROC) which is a commonly used summary of the entire ROC curve. The highest AUROC achieved Methods A($\infty$) and E($\Re$). Recall that Method A($\infty$) uses the optimal ID/OOD discriminator and the proposed Method E($\Re$) subsumes A($\infty$).


\subsection{Bounded Precision-Recall rejection model}
\label{sec:MetricsPrecRecallModel}

Let $\precSamp(c)=\left(1-\pi\right)\tprSamp(c)/((1-\pi)\tprSamp(c)+\pi\fprSamp(c))$ be the sample precision of the selective function~$c$.
Given the target recall $\tpr_{\rm min}\in(0,1]$ and precision $\prec_{\rm min}\in(0,1]$, the best selective classifier $(h_n,c_n)$ out of $\SQ$ is found by solving 
\begin{equation}
 \label{equ:EmpiricalPrecReallModel}
 (h_n,c_n)\in\Argmin_{(h,c)\in\SQ} \RselSamp(h,c) \qquad\mbox{s.t.}\qquad \tprSamp(h,c) \geq \tpr_{\rm min}\,,\quad
  \precSamp(h,c) \geq \prec_{\rm min}\,. 
\end{equation}
 \paragraph{Proposed evaluation metric} If problem~\equ{equ:EmpiricalPrecReallModel} is feasible, $\RselSamp(h_n,c_n)$ is reported as the performance estimator of OODD method which produced $\SQ$. Otherwise, the method is marked as unable to achieve the target Precison/Recall. Tab.~\ref{tab:ResultsSynthetic} shows the selective risk for the methods A-D at the Precision $\prec_{\rm min}=0.9$ and recall $\tpr_{\rm max}=0.7$. The minimal $\RselSamp$ is achieved by the proposed method D($\Re$), followed by B(0.2) and A($\infty$), while method C(0) is unable to achieve the target Precision/Recall. Note that single-score methods A-C achieve the same $\RselSamp$ under both TPR-FPR and Prec-Recall models while the results for double-score method D($\Re$) differ. The reason is that both models share the same constraint $\tprSamp\geq 0.7$ (TPR is Recall) which is active, while the other two constraints are not active because $\RselSamp$ is a monotonic function w.r.t. the value of the decision threshold.

\paragraph{Precision-Recall (PR) curve} To choose feasible bounds on $\prec_{\rm min}$ and $\tpr_{\rm min}$ before solving~\equ{equ:EmpiricalPrecReallModel}, one can plot the PR curve, i.e., the values of precision and recall attainable by the classifiers in $\SQ$. For single-score methods, the PR curve is a set of points obtained by varying the decision threshold: $\PR(\SQ)=\{(\precSamp(h,c),\tprSamp(h,c))\mid c(x)=\leftbb s(x)\leq \lambda \rightbb\,, \lambda\in\Re \}$. In case of double-score methods, we vary $\tpr_{\rm min}\in[0,1]$ and for each $\tpr_{\rm min}$ we choose the maximal feasible $\precSamp$, i.e., $\PR(\SQ)=\{(\prec,\tpr_{\rm min})\mid \prec=\max_{(h,c)\in\SQ}\precSamp(h,c)\;s.t.\;\tprSamp(h,c)\geq\tpr_{\rm min}\,,\;\;\tpr_{\rm min}\in[0,1]\}$. See Appendix for PR curve of the methods A-D. We compute the Area Under the PR curve and report it for Methods A-D in Tab.~\ref{tab:ResultsSynthetic}. Rankings of the methods w.r.t  AUPR and AUROC are the same. 

\subsection{Shortcomings of existing evaluation metrics}
\label{sec:ExistingMetrics}

The most commonly used metrics to evaluate OODD methods are the AUROC and AUPR~\cite{Hendrycks-baseline-ICLR17,Neal-ECML2018,Devries-arxiv2018,Malinin-NEURIPS2018,Chen-PAMI2022,Song-Neurips2022}. Both metrics measure the ability of the selective function $c(x)$ to distinguish ID from OOD samples. AUROC and AUPR are often the only metrics reported although they completely ignore the performance of the ID classifier. Our synthetic example shows that high AUROC/AUPR is not a precursor of a good OOD selective classifier. E.g., Method A($\infty$), using optimal OOD/ID discriminator, attains the highest (best) AUROC and AUPR (see Tab.~\ref{tab:ResultsSynthetic}), however, at the same time Method A($\infty$) achieves the highest (worst) $\RselSamp$ under both rejection models, and it is also the worst misclassification detector according to the OSCR score defined below. 

The performance of the ID classifier $h(x)$ is usually evaluated by the ID classification accuracy (a.k.a. closed set accuracy)~\cite{Neal-ECML2018,Devries-arxiv2018} and by the OSCR score~\cite{Dhamija-NIPS2018,Granese-Nips2021,Chen-PAMI2022}. The ID accuracy measures the performance of $h(x)$ assuming all inputs are accepted, i.e., $c(x)=1$, $\forall x\in\SX$, hence it says nothing about the performance on the actually accepted samples like $\RselSamp$. E.g., Methods A-D in our synthetic example use the same classifier $h(x)$ and hence have the same ID accuracy, however, they perform quite differently in terms of the other more relevant metrics, like $\RselSamp$ or OSCR. The OSCR score is defined as the area under CCR versus FPR curve~\cite{Yang-Nips2022}, where the CCR stands for the correct classification rate on the accepted ID samples; in case of 0/1-loss ${\rm CCR}=1-\RselSamp$. The CCR-FPR curve evaluates the performance of the ID classifier on the accepted samples, but it ignores the ability of $c(x)$ to discriminate OOD and ID samples as it does not depend on TPR. E.g., Method C(0), using the optimal misclassification detector, achieves the highest (best) OSCR score; however, at the same time, it has the lowest (worst) AUROC and AUPR.

Other, less frequently used metrics involve: F1-score, FPR$\at$TPRx, TNR$\at$TPRx, CCR$\at$FPRx~\cite{Hendrycks-baseline-ICLR17,Granese-Nips2021,Chen-PAMI2022,Yang-Nips2022,Song-Neurips2022}. All these metrics are derived from either ROC, PR or CCR-FPR curve, and hence they suffer with the same conceptual problems as AUROC, AUPR and OSCR, respectively.

We argue that the existing metrics evaluate only one aspect of the OOD selective classifier, namely, either the ability to disciminate ID from OOD samples, or the performance of ID classifier on the accepted (or on possibly all) ID samples. We show that in principle there can be methods that are best OOD/ID discriminators but the worst misclassification detectors and vice versa. Therefore, using individual metrics can (and often does) provide inconsistent ranking of the evaluated methods.

\subsection{Summary}

We propose novel evaluation metrics derived from the definition of the optimal strategy under the proposed OOD rejection models. The proposed metrics simultaneously evaluate the classification performance on the accepted ID samples and they guarantee the perfomance of the OOD/ID discriminator, either via constraints in TPR-FPR or Precision-Recall pair. Advantages of the proposed metrics come at a price. Namely, we need to specify feasible target TPR and FPR, or Precision and Recall, depending on the model used. However, feasible values of TPR-FPR and Prec-Recall pairs can be easily read out of the ROC and PR curve, respectively. We argue that setting these extra parameters is better than using the existing metrics that provide incomplete, if used separately, or inconsistent, if used in combination, view of the evaluated methods.

Another issue is solving the problems~\equ{equ:EmpiricalTprFprModel} and~\equ{equ:EmpiricalPrecReallModel} to compute the proposed evaluation metrics and figures. Fortunately, both problems lead to optimization w.r.t one or two varibales in case of the single-score and double-score methods, respectively. A simple and efficient algorithm to solve the problems in $\SO(n\log n)$ time is provided in Appendix. 



\begin{table}
\centering
\scalebox{0.72}{
\begin{tabular}{ll||c|c|c||c|c|c||c|c|c}
 &
 & \multicolumn{3}{c||}{OOD: notmnist} 
 & \multicolumn{3}{c||}{OOD: fashionmnist} 
 & \multicolumn{3}{c}{OOD: cifar10} \\
 &
 & $\downarrow$ S. risk at & \multicolumn{1}{c|}{} & \multicolumn{1}{l||}{}
 & $\downarrow$ S. risk at & \multicolumn{1}{c|}{} & \multicolumn{1}{l||}{} 
 & $\downarrow$ S. risk at & \multicolumn{1}{c|}{} & \multicolumn{1}{l}{}\\
 &
 & TPR(0.80) & \multicolumn{1}{c|}{} & \multicolumn{1}{l||}{} 
 & TPR(0.80) & \multicolumn{1}{c|}{} & \multicolumn{1}{l||}{} 
 & TPR(0.80) & \multicolumn{1}{c|}{} & \multicolumn{1}{l}{}\\
 &
Method 
& FPR(0.08) & $\uparrow$ AUROC & $\uparrow$ OSCR 
& FPR(0.10) & $\uparrow$ AUROC & $\uparrow$ OSCR 
& FPR(0.29) & $\uparrow$ AUROC & $\uparrow$ OSCR\\ 
\hline\hline
\multirow{8}{*}{\rotatebox[origin=c]{90}{ID: mnist}} 
&
MSP \cite{Hendrycks-baseline-ICLR17} 
& {\bf0.00014} & 0.936 & {\bf0.996} 
& {\bf0.00013} & 0.956 & {\bf0.994}
& {\bf0.00013} & 0.989 & 0.991\\
&
MLS \cite{pmlr-v162-hendrycks22a} 
& 0.00139 & 0.941 & 0.993 
& 0.00139 & 0.972 & 0.991
& 0.00139 & {\bf0.993} & 0.990\\
&
ODIN \cite{liang2018enhancing} 
& 0.00069 & 0.942 &  0.993 
& 0.00069 & 0.970 & 0.991
& 0.00069 & 0.993 & 0.990\\
&
REACT \cite{sun2021react} 
& 0.00637 & 0.962 & 0.991 
& 0.00637 & {\bf0.985} & 0.990
& 0.00637 & 0.992 & 0.989\\
&
KNN \cite{pmlr-v162-sun22d} 
& 0.00041 & 0.976 & 0.991 
& 0.00041 & 0.947 & 0.993 
& 0.00041 & 0.976 & 0.991\\
&
VIM \cite{9879414} 
& 0.00193 & {\bf0.983} & 0.990 
& 0.00194 & 0.926 & 0.993 
& 0.00194 & 0.860 & {\bf0.995}\\
\cline{2-11}
\noalign{\vskip\doublerulesep
         \vskip-\arrayrulewidth}
\cline{2-11}
\noalign{\vskip\doublerulesep
         \vskip-\arrayrulewidth}
&
KNN+MSP 
& {\bf0.00000} & 0.976  & {\bf0.996} 
& {\bf0.00000} & 0.962 & {\bf0.994}
& {\bf0.00000} & 0.991 & 0.991\\
&
VIM+MSP 
& 0.00014 & {\bf0.987}  & {\bf0.996} 
& 0.00013 & 0.976 & {\bf0.994}
& 0.00013 & 0.992 & {\bf0.995}\\
\\
 &
 & \multicolumn{3}{c||}{OOD: cifar100} 
 & \multicolumn{3}{c||}{OOD: tiny imagenet} 
 & \multicolumn{3}{c}{OOD: mnist} \\
 &
 & $\downarrow$ S. risk at & \multicolumn{1}{c|}{} & \multicolumn{1}{l||}{}
 & $\downarrow$ S. risk at & \multicolumn{1}{c|}{} & \multicolumn{1}{l||}{} 
 & $\downarrow$ S. risk at & \multicolumn{1}{c|}{} & \multicolumn{1}{l}{}\\
 &
 & TPR(0.80) & \multicolumn{1}{c|}{} & \multicolumn{1}{l||}{} 
 & TPR(0.80) & \multicolumn{1}{c|}{} & \multicolumn{1}{l||}{} 
 & TPR(0.80) & \multicolumn{1}{c|}{} & \multicolumn{1}{l}{}\\
 &
Method 
& FPR(0.21) & $\uparrow$ AUROC & $\uparrow$ OSCR 
& FPR(0.19) & $\uparrow$ AUROC & $\uparrow$ OSCR 
& FPR(0.19) & $\uparrow$ AUROC & $\uparrow$ OSCR\\ 
\hline\hline
\multirow{8}{*}{\rotatebox[origin=c]{90}{ID: cifar10}} 
&
MSP \cite{Hendrycks-baseline-ICLR17} 
& 0.00676 & 0.871 & {\bf0.977}
& 0.00676 & 0.887 & {\bf0.976} 
& 0.00676 & 0.899 & {\bf0.976}\\
&
MLS \cite{pmlr-v162-hendrycks22a} 
& 0.00984 & 0.861 & 0.973 
& 0.00984 & 0.885 & 0.971
& 0.00984 & 0.905 & 0.971\\
&
ODIN \cite{liang2018enhancing} 
& 0.01000 & 0.851 & 0.975 
& 0.01000 & 0.864 & 0.974
& 0.00995 & 0.915 & 0.969\\
&
REACT \cite{sun2021react} 
& 0.00856 & 0.864 & 0.973
& 0.00856 & 0.888 & 0.971
& 0.00856 & 0.883 & 0.972\\
&
KNN \cite{pmlr-v162-sun22d} 
& {\bf0.00665} & {\bf0.896} & 0.974
& {\bf0.00665} & {\bf0.914} & 0.972
& {\bf0.00665} & {\bf0.916} & 0.973\\
&
VIM \cite{9879414} 
& 0.01232 & 0.872 & 0.972 
& 0.01232 & 0.888 & 0.971
& 0.01236 & 0.873 & 0.974\\
\cline{2-11}
\noalign{\vskip\doublerulesep
         \vskip-\arrayrulewidth}
\cline{2-11}
\noalign{\vskip\doublerulesep
         \vskip-\arrayrulewidth}
&
KNN+MSP 
& {\bf0.00652} & {\bf0.896} & {\bf0.977}
& {\bf0.00652} & {\bf0.914} & {\bf0.976}
& {\bf0.00652} & {\bf0.916} & {\bf0.976}\\
&
VIM+MSP 
& 0.00676 & 0.879 & {\bf0.977}
& 0.00676 & 0.894 & {\bf0.976}
& 0.00676 & 0.900 & {\bf0.976}\\
\end{tabular}}
\caption{Evaluation of existing single-score methods MSP, MLS, ODIN, REACT, KNN and two instances of the proposed double-score strategy: KNN+MSP and VIM+MSP. We use MNIST (top table) and CIFAR10 (bottom table) as ID, and three different datasets as OOD. We report the standard AUROC and OSCR, and the proposed selective risk at target TPR and FPR, where the selective risk corresponds to the classification error on accepted ID samples. Best results are in bold.} \vspace{-0.5cm}
\label{tab:ResulsRealData}
\end{table}

\section{Experiments}

In this section, we evaluate single-score OODD methods and the proposed double-score strategy, using the existing and the proposed evaluation metrics. We use MSP~\cite{Hendrycks-baseline-ICLR17}, MLS~\cite{pmlr-v162-hendrycks22a}, ODIN \cite{liang2018enhancing} as baselines and REACT~\cite{sun2021react}, KNN~\cite{pmlr-v162-sun22d}, VIM \cite{9879414} as repesentatives of recent single-score approaches. We evaluate two instances of the double-score strategy. First, we combine the scores of MSP~\cite{Hendrycks-baseline-ICLR17} and KNN~\cite{Sun-NIPS2021} and, second, scores of MSP and VIM~\cite{9879414}. MSP score is asymptotically the best misclassification detector, while KNN and VIM are two best OOD/ID discriminators according to their AUROC. We always use the ID classifier of the MSP method. The evaluation data and implementations of OODD methods are taken from OpenOOD benchmark~\cite{Yang-Nips2022}. Because the datasets have unrealistically high portion of OOD samples, e.g., $\pi>0.5$, we use metrics that do not depend on $\pi$. Namely, AUROC and OSCR as the most frequently used metrics, and the proposed selective risk at TPR and FPR. We use 0/1-loss, hence the reported selective risk is the classification error on accepted ID samples with guranteed TPR and FPR. In all experiments we fix the target TPR to 0.8 while FPR is set for each database to the highest FPR attained by all compared methods. 

Results are presented in Tab.~\ref{tab:ResulsRealData}. It is seen that the single-score methods with the highest AUROC and OSCR are always different, which prevents us to create a single conclusive ranking of the evaluated approaches. MSP is almost consistently the best misclassification detector according to OSCR. The best OOD/ID discriminator is, according to AUROC, one of the recent methods: REACT, KNN, or VIM. The proposed double-score strategy, KNN+MSP and VIM+MSP, consistently outperforms the other approaches in all metrics.

\section{Conclusions}

This paper introduces novel reject option models which define the notion of the optimal prediction strategy for OOD setups. We prove that all models, despite their different formulations, share the same class of optimal prediction strategies. The main insight is that the optimal prediction strategy must trade-off the ability to detect misclassified examples and to distinguish ID from OOD samples. This is in contrast to existing OOD methods that output a single uncertainty score. We propose a simple and effective double-score strategy that allows us to boost performance of two existing OOD methods by combining their uncertainty scores. Finally, we suggest improved evaluation metrics for assessing OOD methods that simultaneously evaluate all aspects of the OOD methods and are directly related to the optimal OOD strategy under the proposed reject option models.


\bibliographystyle{plain}
{
\small
\bibliography{main}
}

\newpage
\setcounter{page}{1}
\section*{Supplementary material}

Appendix~\ref{sec:AppendixA} provides proofs of theorems stated in Sec.~\ref{sec:OodRejectOptModels}, where we presented the proposed reject option models for the OOD setup and their optimal strategies. Appendix~\ref{sec:AppendixA} is organized as follows:
\begin{itemize}[itemsep=1pt,topsep=0pt,parsep=0pt,itemindent=-1cm,labelwidth=0cm,align=left]
    \item Appendix~\ref{appx:TheoremCostBasedModel}. Proof of Theorem~\ref{thm:costBaedModelSolution} providing an optimal strategy of the cost-based OOD model.
    \item Appendix~\ref{appx:BayesIsOptimal}. Proof of Theorem~\ref{thm:BayesOptimalToTprFprModel} and Theorem~\ref{thm:BayesOptimalToPrecRecallModel} that claim that the Bayes ID classifier~\equ{equ:BayesCls} is an optimal solution of the bounded TPR-FPR and the bounded Precision-Recall model, respectively. The proof of both theorems is the same, hence we put it to the same section.
    \item Appendix~\ref{appx:thm:OptSolOfTprFprModelFixedCls}. Proof of Theorem~\ref{thm:OptSolOfTprFprModelFixedCls} providing a form of an optimal selective function under the bounded TPR-FPR model for an arbitrary fixed ID classifier. 
    \item Appendix~\ref{appx:TauFunction}. In this section, we characterize the form of $\tau(x)$ function, which defines the acceptance probability of boundary inputs $\SX_{s(x)=\lambda}=\{x\in\SX\mid s(x)=\lambda\}$ for the optimal selective function~\equ{equ:optSelFunTprFprModel}. 
    \item Appendix~\ref{appx:TprFprAsLP}. In the case of finite input space, i.e. $|\SX|< \infty$, we can find an optimal selective function under the bounded TRP-FPR model via Linear Programming described in this section.
    \item Appendix~\ref{appx:OptSolPrecRecallModel}. Proof of Theorem~\ref{thm:PrecRecallModel} providing a form of an optimal selective function under the Bounded Precision-Recall model for an arbitrary fixed ID classifier. 
\end{itemize}

Appendix~\ref{appx:PostHocAndMetric} provides supplementary material for Sec.~\ref{sec:Metrics}. The evaluation curves obtained for the exemplar methods on synthetic data are shown in Sec.~\ref{appx:PostHocAndMetricFigures}. The algorithm to solve the problems~\equ{equ:EmpiricalTprFprModel} and~\equ{equ:EmpiricalPrecReallModel} is discussed in Sec.~\ref{appx:PostHocAndMetricAlgos}.

\appendix
\section{Proofs of theorems from Sec.~\ref{sec:OodRejectOptModels}}
\label{sec:AppendixA}


\subsection{Proof of Theorem~\ref{thm:costBaedModelSolution} }
\label{appx:TheoremCostBasedModel}

Due to the additivity of the expected risk $  R(h,c) = \EE_{x,y\sim p(x,\bar{y})}\bar{\ell}(\bar{y},(h,c)(x))$, the optimal strategy minimizing the risk can be found for each input $x\in\SX$ separately by solving 
\[
   q^*(x) = \argmin_{q\in\bar{\SY}}R_x(q)
\]
where $R_x(q)$ is the partial risk defined as
\begin{multline*}
  R_x(q) = \sum_{\bar{y}\in\bar\SY} p(x,\bar{y})\, \bar\ell(\bar{y},q)
  = p_O(x)\pi\Big ( \leftbb q=\emptyset\rightbb\,\varepsilon_3 + \leftbb q(x)\neq \emptyset \rightbb \varepsilon_2 \Big ) \\+ (1-\pi)\sum_{y\in\SY}p_I(x,y) \Big (\leftbb q = \emptyset \rightbb \varepsilon_1 + \leftbb q\neq \emptyset \rightbb\,\ell(y,q)
  \Big )
\end{multline*}
We can se that
\begin{eqnarray*}
  R_x(q=\reject) &= &p_O(x)\,\pi\,\varepsilon_3+P_I(x)\,(1-\pi) \,\varepsilon_1     \\
  R_x(q\neq \reject) & = & p_O(x)\,\pi\,\varepsilon_2 + (1-\pi) \sum_{y\in\SY}p_I(x,y)\,\ell(y,q)\\
  \min_{q\in\SY} R_x(q) & = & p_O(x)\,\pi\,\varepsilon_2 + (1-\pi) \,p_I(x)\,r_B(x) \:,
\end{eqnarray*}
where $r(x)$ is the minimal conditional risk
\[
   r_B(x) = \min_{\hat{y}\in \SY}\sum_{y\in\SY} p_I(y\mid x) \ell(y,\hat{y})
\]
It is optimal to reject when
\begin{eqnarray*}
   0 & \leq & \min_{q\in\SY}R_x(q) - R_x(q= \reject)\\
     & = & p_O(x)\,\pi\,\varepsilon_2 + (1-\pi) \,p_I(x)\,r(x) - p_O(x)\,\pi\,\varepsilon_3-p_I(x)\,(1-\pi) \,\varepsilon_1\\
     & = & p_O(x)\,\pi\,(\varepsilon_2-\varepsilon_3)+(1-\pi)\,p_I(x)\,(r_B(x)-\varepsilon_1)\\
     & = & s(x) \:.
\end{eqnarray*}
The inequality $0\leq s(x)$ is equalivalent to 
\[
    r_B(x) + (\veps_2-\veps_3)\frac{\pi}{1-\pi}g(x) \geq \varepsilon_1\:.
\]
In case that $\pi<1 0$ and $\frac{K}{0}=\infty$, the optimal strategy then reads
\[
   q^*=\left \{ 
   \begin{array}{rcl}
    \reject &\mbox{if}& s(x) \geq 0 \\
    \argmin\limits_{\hat{y}\in\SY} \sum_{y\in\SY}p_I(x,y)\ell(y,\hat{y})
      &\mbox{if}& s(x) \leq 0 
    \end{array} \right .
\]
Note that in the boundary case $s(x)=0$ we can reject or accept arbitrarily without affecting the solution.

\subsection{Proof of Theorem~\ref{thm:BayesOptimalToTprFprModel} and Theorem~\ref{thm:BayesOptimalToPrecRecallModel} }
\label{appx:BayesIsOptimal}

The definition of $h_B$ allows to derive $\Rsel(h_B,c) \le \Rsel(h,c)$ as follows:
\begin{align*}
\Rsel(h_B,c)&=\frac{1}{\phi(c)}\int\limits_{\SX}\sum\limits_{y\in\SY}p(x,y)\,\ell(y,h_B(x))\,c(x)\,dx \\
&= \frac{1}{\phi(c)}\int\limits_{\SX}p(x)c(x)\left(\sum\limits_{y\in\SY} p(y\,|\,x)\,\ell(y,h_B(x))\right)\,dx\\
&\le \frac{1}{\phi(c)}\int\limits_{\SX}p(x)c(x)\left(\sum\limits_{y\in\SY} p(y\,|\,x)\,\ell(y,h(x))\right)\,dx\\
&=\frac{1}{\phi(c)}\int\limits_{\SX}\sum\limits_{y\in\SY}p(x,y)\,\ell(y,h(x))\,c(x)\,dx\\
&=\Rsel(h,c)\,.
\end{align*}

\subsection{Proof of Theorem~\ref{thm:OptSolOfTprFprModelFixedCls}}
\label{appx:thm:OptSolOfTprFprModelFixedCls}

\def\sepfnc{\varphi}
\def\themap{P}
\def\optsol{c^{*}}
\def\sint{\mathrm{int}}
\def\chull{\mathrm{conv}}
\def\dim{\mathrm{dim\,\,}}
\def\spann{\mathrm{span}}
\newcommand{\Rho}{\mathrm{P}}

It is a direct consequence of the following theorem.

\begin{theorem}\label{thm:optimalsol}
	For any $(h,c)$ optimal to~\equ{equ:TprFprModel}, there exist real numbers $\lambda, \mu$ such that
	\begin{align*}
	\int\limits_{\SX^{<}} p_I(x)c(x)dx &= \int\limits_{\SX^{<}} p_I(x)dx \,, \\
	\int\limits_{\SX^{>}} p_I(x)c(x)dx &= 0 \,,
	\end{align*}
	where
	\begin{align*}
	\SX^{<} &= \{x\in \SX \mid r(x)+\mu \frac{p_O(x)}{p_I(x)} < \lambda\} \,, \\
	\SX^{>} &= \{x\in \SX \mid r(x)+\mu \frac{p_O(x)}{p_I(x)} > \lambda\} \,.
	\end{align*}
\end{theorem}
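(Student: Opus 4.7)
The plan is to convert the ratio-minimization with an inequality constraint on the TPR into a linear problem with an equality TPR constraint, and then apply a two-constraint Neyman--Pearson argument to extract the multipliers $\lambda, \mu$. Let $(h, c^{*})$ be optimal for~\equ{equ:TprFprModel}, write $\phi^{*} = \phi(c^{*})$ and denote the numerator of $\Rsel(h,c)$ by
\[
    I(c) = \int_{\SX} p_I(x)\, r(x)\, c(x)\, dx\,.
\]
First I would observe that $c^{*}$ must also minimize $I$ over
\[
    \SF_{\phi^{*}} = \Bigl\{c \in [0,1]^{\SX} : \int_{\SX} p_I c\, dx = \phi^{*},\ \int_{\SX} p_O c\, dx \le \rho_{\max}\Bigr\}\,,
\]
because any $c\in\SF_{\phi^{*}}$ with $I(c) < I(c^{*})$ would give $\Rsel(h,c) < \Rsel(h,c^{*})$, contradicting optimality. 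Restricting attention to $\SF_{\phi^{*}}$ removes the ratio and turns the problem into a linear objective with one equality and one inequality constraint on the selective function — a classical generalized Neyman--Pearson setup.

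Second, for any scalars $\lambda \in \Re$ and $\mu \ge 0$ the identity
\[
    I(c) - \lambda \int_{\SX} p_I c\, dx + \mu \int_{\SX} p_O c\, dx
    = \int_{\SX} c(x)\, p_I(x)\,\bigl[r(x) + \mu\, g(x) - \lambda\bigr]\, dx + \mu\!\int_{\{p_I=0\}}\! p_O c\, dx
\]
holds, so for each such $(\lambda,\mu)$ the Lagrangian is minimized over $c \in [0,1]^{\SX}$ by any function that equals $1$ on $\SX^{<}$, equals $0$ on $\SX^{>}$, and is arbitrary on the $p_I$-null boundary set. The remaining task is to choose $(\lambda,\mu)$ so that at least one such pointwise minimizer is feasible for $\SF_{\phi^{*}}$ at the active levels $(\phi^{*},\rho^{*})$ with $\rho^{*} = \rho(c^{*})$. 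I would obtain this by viewing the image $\SK = \{(\phi(c),\rho(c),I(c)) : c \in [0,1]^{\SX}\} \subseteq \Re^{3}$, which is convex by convex combinations of selective functions, and applying the supporting hyperplane theorem at the boundary point $(\phi^{*},\rho^{*},I(c^{*}))$: the resulting normal vector delivers multipliers $(\lambda,\mu)$ with $\mu \ge 0$ for which $c^{*}$ is Lagrangian-optimal.

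Third, once $(\lambda,\mu)$ are fixed, the pointwise Lagrangian optimality forces $c^{*}(x) = 1$ for $p_I$-almost every $x \in \SX^{<}$ and $c^{*}(x) = 0$ for $p_I$-almost every $x \in \SX^{>}$, which are exactly the two integral identities in the theorem. Theorem~\ref{thm:OptSolOfTprFprModelFixedCls} then follows by defining $\tau(x)$ on the boundary set $\{s(x)=\lambda\}$ so that the equality TPR and (if active) FPR constraints are matched, which, on the continuous case, may be set arbitrarily since $\{s=\lambda\}$ is $p_I$-null.

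The main obstacle I anticipate is the multiplier-existence step: Slater's condition is not automatic here because the feasible set lives in $[0,1]^{\SX}$ and both active constraints may bind, and one must also accommodate the possibility that $p_I$ vanishes on a set where $p_O$ does not. I would handle this either by invoking a measure-theoretic version of the generalized Neyman--Pearson lemma (exploiting convexity of $\SK$ and a supporting-hyperplane argument), or, in the event $\SK$ fails to be closed, by a direct swap argument: whenever the claimed structure is violated on a $p_I$-positive set, I would exhibit a small mass transfer from $\SX^{>}$ to $\SX^{<}$ that preserves $\phi(c^{*})$ and $\rho(c^{*})$ but strictly decreases $I$, contradicting optimality.
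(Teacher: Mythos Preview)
Your approach via convex duality on the image set $\SK=\{(\phi(c),\rho(c),I(c)):c\in[0,1]^\SX\}\subseteq\Re^3$ is a genuinely different route from the paper's. The paper never invokes multipliers or supporting hyperplanes; instead it works in a two-dimensional ``score plane'': each $x$ with $p_I(x)>0$ is mapped to $P(x)=\bigl(p_O(x)/p_I(x),\,r(x)\bigr)$, the domain is partitioned into $\SX_0=\{c=0\}$, $\SX_1=\{c=1\}$, $\SX_2=\{0<c<1\}$, and two elementary exchange claims (mass transfers between two, respectively three, points of $\SX$) rule out every configuration in which the images $A_i=P(\SX_i)$ fail to be separated by a line $L$ with $A_2\subseteq L$. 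The equation of $L$ then \emph{produces} $\lambda,\mu$. This is carried out first for countable $\SX$ and extended to arbitrary $\SX$ by a box-discretization and limit argument. Your duality route is more conceptual and, when it goes through, shorter; the paper's route is constructive and sidesteps any multiplier-existence question by building the separating line directly.

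The obstacle you anticipate is the crux, and it is not merely about closedness of $\SK$ or a Slater condition: the supporting hyperplane at $(\phi^*,\rho^*,I(c^*))$ can have zero $I$-coefficient---this occurs precisely when $(\phi^*,\rho^*)$ lies on the boundary of the attainable $(\phi,\rho)$ region---and then the normal does not deliver finite $(\lambda,\mu)$. Your swap fallback is the right instinct, but as you phrase it, it already presupposes $(\lambda,\mu)$ and the sets $\SX^{<},\SX^{>}$; what is actually needed (and what the paper does) is to run the swaps in the $P$-plane to establish linear separability first and read off $(\lambda,\mu)$ from the separating line afterward. If instead you cite a generalized Neyman--Pearson lemma with two side constraints, that would close the gap, but the proof of such a lemma is essentially the same exchange argument the paper carries out by hand.
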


\begin{proof}
	We first give a proof for countable sets $\SX$, when integrals can be expressed as sums, then we present its generalization to arbitrary $\SX$.
	
	Assume $\SX$ is countable and $(h,c)$ is optimal to~\equ{equ:TprFprModel}. Observe that we do not need to pay attention to those $x\in\SX$ for which $p_I(x)=0$ as they do not have any impact on the theorem statement. Denote
	\begin{align*}
	\SX^+&= \left\{x\in \SX \mid p_I(x) > 0\right\} \,, \\
	\SX_0 &= \left\{x\in \SX^+ \mid c(x) = 0\right\} \,, \\
	\SX_1 &= \left\{x\in \SX^+ \mid c(x) = 1\right\} \,, \\
	\SX_2 &= \left\{x\in \SX^+ \mid 0 < c(x) < 1 \right\} \,.
	\end{align*}
	Let $\themap: \SX^+ \to \Re_+^2$ be a mapping such that $\themap(x)=\left(\frac{p_O(x)}{p_I(x)}, \frac{R(x)}{p_I(x)}\right)$, where
	\[
	R(x)= \sum_{y\in \SY}p(x,y) \ell(y,h(x)) \,.
	\]
	
	To confirm the existence of suitable $\lambda, \mu$, it suffices to show that the sets
	\begin{align}
	A_0 &= \left\{\themap(x) \mid x\in\SX_0 \right\} \,, \label{setA1} \\
	A_1 &= \left\{\themap(x) \mid x\in\SX_1 \right\} \,, \\
	A_2 &= \left\{\themap(x) \mid x\in\SX_2 \right\} \label{setA3}
	\end{align}
	are ``almost'' linearly separable, i.e., there is a line $L$ that includes $A_2$ and linearly separates the sets $A_0\setminus L$, $A_1\setminus L$. The existence of such $L$ is ensured if
	\begin{equation} \label{equ:lincond}
	\dim \spann \left( (\chull(A_0) \cap \chull(A_1)) \cup A_2 \right) < 2 \,,
	\end{equation}
	where $\chull(\cdot)$ denotes the convex hull and $\spann(\cdot)$ denotes the span of a set of vectors.
	
	We will check validity of condition~\equ{equ:lincond} by using the following two claims.
	
	\begin{claim}\label{clm:origin}
		Let $x_1,x_2\in \SX^+$, $r(x_1)>r(x_2)$, and $\frac{p_O(x_1)}{p_I(x_1)}\ge \frac{p_O(x_2)}{p_I(x_2)}$. Then, $x_1\in \SX_0$ or $x_2\in \SX_1$.
	\end{claim}
	
	\emph{Proof of the claim.} By contradiction. Assume $c(x_1) > 0$ and $c(x_2)<1$. Define a selective function $c'$ which is identical to $c$ up to $c'(x_1)=c(x_1)-\Delta$, $c'(x_2)=c(x_2)+\frac{p_I(x_1)}{p_I(x_2)}\Delta$, where
	\[
	\Delta = \min\left\{c(x_1), \frac{p_I(x_2)}{p_I(x_1)}(1-c(x_2))\right\} > 0\,.
	\]
	Now, observe that
	\[
	\phi(c')-\phi(c)=-\Delta \cdot p_I(x_1)+\frac{p_I(x_1)}{p_I(x_2)}\Delta \cdot p_I(x_2)=0 \,,
	\]
	\[
	\rho(c')-\rho(c)=-\Delta \cdot p_O(x_1)+\frac{p_I(x_1)}{p_I(x_2)}\Delta \cdot p_O(x_2) \le 0 \,, \quad \text{and}
	\]
	\[
	\phi(c) \left(\Rsel(h,c')-\Rsel(h,c)\right)=-\Delta \cdot R(x_1)+\frac{p_I(x_1)}{p_I(x_2)}\Delta \cdot R(x_2)=\Delta \cdot p_I(x_1)(r(x_2)-r(x_1)) < 0
	\]

	contradicts the optimality of $(h,c)$. \hspace*{\fill}	$\blacksquare$

	\begin{claim}\label{clm:collinear}
		Let $x_1,x_2,x_3$ be elements of $\SX^+$ such that the points $P_1=\themap(x_1), P_2=\themap(x_2), P_3=\themap(x_3)$ are non-collinear and $\beta \cdot P_3=\alpha_1\cdot P_1 + \alpha_2 \cdot P_2$ holds for some $\alpha_1, \alpha_2, \beta \in \Re_+$, where $\alpha_1+\alpha_2=1$.
		\begin{itemize}
			\item If $\beta < 1$, then $x_3\in \SX_0$ or $\{x_1,x_2\} \cap \SX_1 \neq \emptyset$.
			\item If $\beta > 1$, then $x_3\in \SX_1$ or $\{x_1,x_2\} \cap \SX_0 \neq \emptyset$.
		\end{itemize}
	\end{claim}
	\emph{Proof of the claim.} We will give a proof for $\beta < 1$ and note that the steps for $\beta > 1$ are analogous.
	
	By contradiction. Assume $c(x_1)<1$, $c(x_2)<1$, and $c(x_3)>0$. To simplify the notation, for $i\in\{1,2,3\}$, let $p_i=p_I(x_i)$, $q_i=p_O(x_i)$, and $R_i=R(x_i)$.
	
	Define a selective function $c'$ which is identical to $c$ up to
	\begin{align*}
	c'(x_1)&=c(x_1)+\Delta \cdot \alpha_1 \frac{p_3}{p_1}\,, \\
	c'(x_2)&=c(x_2)+\Delta \cdot \alpha_2 \frac{p_3}{p_2}\,, \\
	c'(x_3)&=c(x_3)-\Delta \,,
	\end{align*}
	where
	\[
	\Delta = \min\left\{c(x_3), \frac{p_1}{\alpha_1 p_3}(1-c(x_1)), \frac{p_2}{\alpha_2 p_3}(1-c(x_2))\right\} > 0\,.
	\]
	Observe that
	\[
	\phi(c')-\phi(c)=\Delta \alpha_1 p_3 + \Delta \alpha_2 p_3 - \Delta p_3 = 0 \,,
	\]
	\[
	\rho(c')-\rho(c)=\Delta\alpha_1 \frac{p_3}{p_1} q_1+ \Delta\alpha_2 \frac{p_3}{p_2} q_2 - \Delta q_3 = \Delta(\beta-1)q_3 \le 0 \,, \quad \text{and}
	\]
	\begin{align*}
	\phi(c) \left( \Rsel(h,c')-\Rsel(h,c) \right)&=\Delta \alpha_1 \frac{p_3}{p_1} R_1+\Delta \alpha_2 \frac{p_3}{p_2} R_2-\Delta R_3\\
	&= \Delta p_3 \left( \alpha_1 \frac{R_1}{p_1} + \alpha_2 \frac{R_2}{p_2} \right) - \Delta R_3 = \Delta p_3 \beta \frac{R_3}{p_3} - \Delta R_3\\
	&= \Delta (\beta-1) R_3 < 0
	\end{align*}
	contradicts the optimality of $c$. \hspace*{\fill}$\blacksquare$
	
	We are ready to confirm condition~\equ{equ:lincond}, this is done by analyzing the potential infeasible cases.

	\begin{enumerate}
		\item\label{itm:first} $\dim\spann(\chull(A_0)\cap \chull(A_1))=2$. Then, there are $x_1,x_2,x_3,x_4\in \SX^+$ such that $P(x_1), P(x_2), P(x_3)$ are non-collinear, $P(x_4)$ is inside the triangle $P(x_1), P(x_2), P(x_3)$, and, either $x_1,x_2,x_3\in\SX_0$, $x_4\in \SX_1$, or $x_1,x_2,x_3\in\SX_1$, $x_4\in \SX_0$.
		\item $\dim \spann(A_2) = 2$. There are $x_1,x_2,x_3\in \SX_2$ such that $P(x_1), P(x_2), P(x_3)$ are non-collinear.
		\item $\dim\spann(\chull(A_0)\cap \chull(A_1))=1$ and $\dim\spann((\chull(A_0)\cap \chull(A_1))\cup A_2)=2$. There are $x_1,x_2\in \SX_0$, $x_3,x_4\in \SX_1$, $x_5\in \SX_2$ such that points $P(x_1), P(x_3)$ lie on a half-line $H_1$, points $P(x_2), P(x_4)$ lie on a half-line $H_2$, where $H_1\cap H_2=\emptyset$ and $\chull(H_1\cup H_2)$ is a line not containing $P(x_5)$.
		\item\label{itm:fourth} $\dim\spann(A_2)=1$, and $\dim\spann((\chull(A_0)\cap \chull(A_1))\cup A_2)=2$. There are $x_1\in\SX_0$, $x_2\in \SX_1$, $x_3,x_4\in \SX_2$ such that $P(x_3)\neq P(x_4)$, points $P(x_3), P(x_4)$ lie on a line $L$, and points $P(x_1)$, $P(x_2)$ lie in one half-plane of $L$, but not on $L$.
	\end{enumerate}
	 It is not difficult to check that all the listed points configurations always enable to select a subset of two or three points whose existence is ruled out by Claim~\ref{clm:origin} or Claim~\ref{clm:collinear}, respectively.
		
	Consider now that $\SX$ is an arbitrary set. 
		
	For $a,b,\varepsilon\in \Re_+$, where $\varepsilon > 0$,
	let $B_{a,b,\varepsilon}=\{ (x,y) \mid a \le x < a + \varepsilon \land b \le y < b + \varepsilon \}$.
	
	For a given $\varepsilon > 0$, we can decompose the positive quadrant $Q=\{(x,y) \mid x\in \Re_+, y \in \Re_+\}$ into countably many pairwise disjoint sets as follows
	\[
	Q=\bigcup \mathcal{B}(\varepsilon)\,, \quad \mathcal{B}(\varepsilon) = \left\{ B_{\varepsilon m, \varepsilon n, \varepsilon} \mid m,n \in \NN \right\} \,.
	\]
	
	For $B\in \mathcal{B}(\varepsilon)$, define
	\begin{align*}
		\SX(B) &= \left\{x\in \SX^+ \mid P(x) \in B\right\} \,, \\
		p_I(B) &= \int\limits_{\SX(B)} p_I(x)dx \,.
	\end{align*}
	In analogy to $\SX^+,\SX_0,\SX_1,\SX_2$, define
	\begin{align*}
		\mathcal{B}^+(\varepsilon) &= \{ B\in \mathcal{B}(\varepsilon) \mid p_I(B)>0\} \,, \\
		c(B) &= \frac{1}{p_I(B)}\int\limits_{\SX(B)} p_I(x)c(x)dx \quad \forall B\in \mathcal{B}^+(\varepsilon) \,, \\
		\mathcal{B}_0(\varepsilon) &= \{ B\in \mathcal{B}^+(\varepsilon) \mid c(B) = 0 \} \,, \\
		\mathcal{B}_1(\varepsilon) &= \{ B\in \mathcal{B}^+(\varepsilon) \mid c(B) = 1 \} \,, \\
		\mathcal{B}_2(\varepsilon) &= \{ B\in \mathcal{B}^+(\varepsilon) \mid 0 < c(B) < 1 \} \,.
	\end{align*}
		
	The set $\mathcal{B}^+(\varepsilon)$ can thus be viewed as a discretisation of $\SX$.
	
	Since $a \cdot p_I(x) \le p_O(x)\le (a+\varepsilon)\cdot p_I(x)$ and $b \cdot p_I(x) \le R(x)\le (b+\varepsilon)\cdot p_I(x)$ for all $x\in \SX(B_{a,b,\varepsilon})$, it holds
	\begin{equation} \label{equ:estim-q}
		a \cdot p_I(B_{a,b,\varepsilon}) c(B_{a,b,\varepsilon}) \le \int\limits_{\SX(B_{a,b,\varepsilon})} p_O(x)c(x)dx \le (a+\varepsilon)\cdot p_I(B_{a,b,\varepsilon}) c(B_{a,b,\varepsilon}) \,,
	\end{equation}
	\begin{equation} \label{equ:estim-R}
		b \cdot p_I(B_{a,b,\varepsilon}) c(B_{a,b,\varepsilon}) \le \int\limits_{\SX(B_{a,b,\varepsilon})} R(x)c(x)dx \le (b+\varepsilon)\cdot p_I(B_{a,b,\varepsilon}) c(B_{a,b,\varepsilon}) \,.
	\end{equation}
	Define
	\begin{align*}
		\check{P}(B_{a,b,\varepsilon}) &=(a,b) \,, \\
		\hat{P}(B_{a,b,\varepsilon}) &=(a+\varepsilon,b+\varepsilon) \,,
	\end{align*}
	i.e., $\check{P}(B_{a,b,\varepsilon})$ and $\hat{P}(B_{a,b,\varepsilon})$ is the bottom-left and top-right corner of $B_{a,b,\varepsilon}$, respectively.
	
	Claims~\ref{clm:origin} and \ref{clm:collinear} can be generalized to elements of $\mathcal{B}^+(\varepsilon)$ as follows.
	
	\begin{claim}\label{clm:origin-gen}
		Let $B_{a,b,\varepsilon},B_{a',b',\varepsilon}\in \mathcal{B}^+(\varepsilon)$, $a\ge a'+\varepsilon$, and $b>b'+\varepsilon$. Then, $B_{a,b,\varepsilon}\in \mathcal{B}_0(\varepsilon)$ or $B_{a',b',\varepsilon}\in \mathcal{B}_1(\varepsilon)$.
	\end{claim}
	
	\emph{Proof of the claim.} Denote $B=B_{a,b,\varepsilon}$, $B'=B_{a',b',\varepsilon}$. By contradiction. Assume $c(B) > 0$ and $c(B')<1$. Find a selective function $c'$ which is identical to $c$ up to $c'(B)=c(B)-\Delta$, $c'(B')=c(B)+\frac{p_I(B)}{p_I(B')}\Delta$, where
	\[
	\Delta = \min\left\{c(B), \frac{p_I(B')}{p_I(B)}(1-c(B'))\right\} > 0\,.
	\]
    Observe that	
	\[
	\phi(c')-\phi(c)=-\Delta \cdot p_I(B)+\frac{p_I(B)}{p_I(B')}\Delta \cdot p_I(B')=0 \,.
	\]
 With the use of~\equ{equ:estim-q} and~\equ{equ:estim-R}, derive
	\begin{align*}
	\rho(c')-\rho(c)&=
	\int\limits_{\SX(B')}p_O(x)c'(x)dx - \int\limits_{\SX(B)}p_O(x)c(x)dx \le (a'+\varepsilon) p_I(B')c(B') - a\cdot p_I(B) c(B) \\ &\le a (\phi(c')-\phi(c)) = 0
	\,, \quad \text{and}
	\end{align*}
	\begin{align*}
	\phi(c) \left(\Rsel(h,c')-\Rsel(h,c)\right)&= \int\limits_{\SX(B')}R(x)c'(x)dx - \int\limits_{\SX(B)}R(x)c(x)dx \\
	&\le (b'+\varepsilon)p_I(B')c(B')-b \cdot p_I(B) c(B) \\ 
	&< b \cdot (\phi(c') - \phi(c)) = 0 \,.
	\end{align*}
	Hence, $(h,c')$ contradicts the optimality of $(h,c)$.\hspace*{\fill}$\blacksquare$

\begin{claim}\label{clm:collinear-gen}
	Let $\varepsilon > 0$ and $B_1,B_2,B_3$ be elements of $\mathcal{B}^+(\varepsilon)$.
	\begin{itemize}
		\item If $\beta \cdot \check{P}(B_3)=\alpha_1\cdot \hat{P}(B_1) + \alpha_2 \cdot \hat{P}(B_2)$, where $\alpha_1, \alpha_2, \beta \in \Re_+$, $\alpha_1+\alpha_2=1$, $\beta < 1$, then $B_3\in \mathcal{B}_0(\varepsilon)$ or $\{B_1,B_2\} \cap \mathcal{B}_1(\varepsilon) \neq \emptyset$.
		\item If $\beta \cdot \hat{P}(B_3)=\alpha_1\cdot \check{P}(B_1) + \alpha_2 \cdot \check{P}(B_2)$, where $\alpha_1, \alpha_2, \beta \in \Re_+$, $\alpha_1+\alpha_2=1$, $\beta > 1$, then $B_3\in \mathcal{B}_1(\varepsilon)$ or $\{B_1,B_2\} \cap \mathcal{B}_0(\varepsilon) \neq \emptyset$.
	\end{itemize}
\end{claim}
\emph{Proof of the claim.} Apply the technique from the proof of Claim~\ref{clm:origin-gen} to the proof of Claim~\ref{clm:collinear}. \hspace*{\fill}$\blacksquare$

For $\varepsilon>0$, define
\[
C_0(\varepsilon) = \bigcup \mathcal{B}_0(\varepsilon)\,, \quad C_1(\varepsilon) = \bigcup \mathcal{B}_1(\varepsilon)\,, \quad
C_2(\varepsilon) = \bigcup \mathcal{B}_2(\varepsilon)\,.
\]

For $\varepsilon > \varepsilon' > 0$, let $B\in \mathcal{B}^+(\varepsilon)$, $B'\in \mathcal{B}^+(\varepsilon')$, $B'\subset B$. Observe that $B\in \mathcal{B}_0(\varepsilon)$ and $p_I(B') > 0$ implies $B'\in \mathcal{B}_0(\varepsilon')$. And similarly, $B\in \mathcal{B}_1(\varepsilon)$ and $p_I(B') > 0$ implies $B'\in \mathcal{B}_1(\varepsilon')$. This means that
\begin{align*}
C_2\left(\frac{\varepsilon}{2}\right) &\subseteq C_2\left(\varepsilon\right) \,, \\
C_0\left(\frac{\varepsilon}{2}\right) \cup C_2\left(\frac{\varepsilon}{2}\right) &\subseteq C_0\left(\varepsilon\right) \cup C_2\left(\varepsilon\right) \,, \\
C_1\left(\frac{\varepsilon}{2}\right) \cup C_2\left(\frac{\varepsilon}{2}\right) &\subseteq C_1\left(\varepsilon\right) \cup C_2\left(\varepsilon\right) \,.
\end{align*}

We can thus define 
\begin{align*}
    C_2&=\lim_{n\to \infty} C_2\left(\frac{\varepsilon}{2^n}\right) \,,\\
    C_0 &= \left(\lim_{n\to \infty} \left[C_0\left(\frac{\varepsilon}{2^n}\right) \cup C_2\left(\frac{\varepsilon}{2^n}\right)\right] \right) \setminus C_2\,, \\
    C_1 &= \left(\lim_{n\to \infty} \left[C_1\left(\frac{\varepsilon}{2^n}\right) \cup C_2\left(\frac{\varepsilon}{2^n}\right)\right] \right) \setminus C_2\,.
\end{align*}
where we utilize the fact: if a sequence of sets $\{D_n\}_{n=0}^{\infty}$ fulfills $D_{n+1}\subseteq D_n\subseteq \Re^2$ for all $n\in\NN$, then $\lim_{n\to \infty} D_n=\bigcap_{n=0}^\infty D_n$.

Note that each $C_i$ corresponds to $A_i$ (see~\ref{setA1}--~\ref{setA3}) in the following sense:
\[
\int_{\SX(A_2\Delta C_2)}p(x)dx = 0 \,, \,\, \int_{\SX((A_0\cup A_2)\Delta (C_0 \cup C_2))}p(x)dx = 0 \,, \,\,\int_{\SX((A_1\cup A_2)\Delta (C_1 \cup C_2))}p(x)dx = 0\,,
\]
where $\Delta$ denotes the symmetric difference of two sets.

It holds
\[
	\dim \spann \left( (\chull(C_0) \cap \chull(C_1)) \cup C_2 \right) < 2 \,,
\]
otherwise we can find $\varepsilon > 0$ and a configuration of two or three elements of $\mathcal{B}^+(\varepsilon)$ which is ruled out by Claims~\ref{clm:origin-gen} and~\ref{clm:collinear-gen} (the analysis of infeasible configurations is analogous to cases~\ref{itm:first}--~\ref{itm:fourth}).
\end{proof}


\subsection{Characterization of function $\tau$ in Theorem~\ref{thm:OptSolOfTprFprModelFixedCls}}
\label{appx:TauFunction}



\begin{theorem}\label{thm:tauform}
	Let there be real numbers $\mu, \lambda$ such that~\equ{equ:TprFprModel} fulfills
	$R(x)+\mu p_O(x) = \lambda p_I(x)$
	for all $x\in \SX$. Then, there are real numbers $\gamma_1, \gamma_2, \chi_1, \chi_2$, where $\gamma_1\le \gamma_2$, and $\chi_1, \chi_2\in [0,1]$, such that the selective function $\tau$ defined as
	\begin{equation*}
	\tau(x) = \left \{
	\begin{array}{rcl}
	1 & \mbox{if} & \gamma_1 < \frac{p_O(x)}{p_I(x)} < \gamma_2 \\
	\chi_1 & \mbox{if} & \frac{p_O(x)}{p_I(x)} = \gamma_1 \\
	\chi_2 & \mbox{if} & \frac{p_O(x)}{p_I(x)} = \gamma_2 \\
	0 & \mbox{otherwise} &
	\end{array}
	\right.
	\end{equation*}
	is an optimal solution to~\equ{equ:TprFprModel}.
\end{theorem}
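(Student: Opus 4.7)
The plan rests on two observations. First, the hypothesis reduces the selective risk to a simple scalar function of the pair $(\phi(c),\rho(c))$. Substituting $R(x) = \lambda p_I(x) - \mu p_O(x)$ into the numerator of $\Rsel(h,c)$ gives $\int_{\SX} R(x)c(x)\,dx = \lambda\phi(c) - \mu\rho(c)$, so any $c$ with $\phi(c)>0$ satisfies $\Rsel(h,c) = \lambda - \mu\rho(c)/\phi(c)$. Consequently, if we produce $\tau$ of the required form whose pair $(\phi(\tau),\rho(\tau))$ equals $(\phi(c^*),\rho(c^*))$ for some optimal $c^*$ of~\equ{equ:TprFprModel}, then $\tau$ is automatically feasible and attains the same (optimal) value of the selective risk.

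Second, the theorem's family already contains Neyman-Pearson tests at its two extremes. Writing $\phi^* = \phi(c^*)$, taking $\gamma_1$ strictly below the essential infimum of $g$ (so the ``otherwise'' clause cannot trigger from the left) and tuning $(\gamma_2,\chi_2)$ to achieve $\phi=\phi^*$ yields a pure threshold-from-below on the likelihood ratio $g$, which by Neyman-Pearson minimizes $\rho$ over all $c$ with $\phi(c) = \phi^*$. Symmetrically, pushing $\gamma_2$ above the essential supremum of $g$ and tuning $(\gamma_1,\chi_1)$ yields a threshold-from-above that maximizes $\rho$ at the same TPR. Denote these by $\tau_L$ and $\tau_U$; Neyman-Pearson then gives $\rho(\tau_L) \le \rho(c^*) \le \rho(\tau_U)$.

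The heart of the proof is then to connect $\tau_L$ to $\tau_U$ by a continuous path inside the theorem's four-parameter family along which $\phi$ remains equal to $\phi^*$. One increases $\gamma_1$ from its initial small value, and at each step adjusts $\gamma_2$ (and, when $\gamma_1$ or $\gamma_2$ crosses an atom of the law of $g$ under $p_I$, the boundary probabilities $\chi_1,\chi_2$) so that $\phi$ stays fixed at $\phi^*$. Because the acceptance region simply shifts from small values of $g$ toward larger ones, $\rho$ varies continuously and monotonically from $\rho(\tau_L)$ up to $\rho(\tau_U)$. The intermediate value theorem then produces a $\tau$ on this path with $\rho(\tau) = \rho(c^*)$; together with the first paragraph, this $\tau$ is an optimal solution of the claimed form.

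The main obstacle will be making the deformation rigorous when $g$ has atoms under $p_I$: one must show that $\chi_1$ and $\chi_2$ can be chosen as continuous functions of the sweep parameter so that $\phi$ stays exactly $\phi^*$ while $\rho$ actually traces out the full interval $[\rho(\tau_L),\rho(\tau_U)]$. The degenerate case $\mu=0$ needs no interpolation at all: $\Rsel\equiv\lambda$ is then constant on every selective function, so the threshold-from-below $\tau_L$ with $\phi = \phi_{\min}$ is feasible by Neyman-Pearson combined with the assumed feasibility of~\equ{equ:TprFprModel}, and it already has the claimed form.
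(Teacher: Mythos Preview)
Your approach is correct and genuinely different from the paper's. Both arguments begin with the same reduction $\Rsel(h,c)=\lambda-\mu\,\rho(c)/\phi(c)$, but then diverge. The paper performs a case split on the sign of $\mu$: for $\mu<0$ it directly constructs the minimiser of $\rho/\phi$ as a one-sided threshold at $\phi=\phi_{\min}$, and for $\mu>0$ it constructs the maximiser of $\rho/\phi$ as a band with $\rho=\rho_{\max}$ and the largest admissible lower cut $\gamma_1$, writing out explicit sup/inf formulas for $\gamma_1,\gamma_2,\chi_1,\chi_2$. Your route instead fixes an optimal $c^*$, reads off its pair $(\phi^*,\rho^*)$, brackets $\rho^*$ between the Neyman--Pearson extremes $\rho(\tau_L)\le\rho^*\le\rho(\tau_U)$ at the same TPR $\phi^*$, and slides the band from $\tau_L$ to $\tau_U$ while holding $\phi\equiv\phi^*$ to hit $\rho^*$ by the intermediate value theorem. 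This is cleaner in that it never needs the sign of $\mu$ (matching the full pair makes $\Rsel$ agree regardless), and it makes transparent why a band on $g$ suffices. What you lose is explicitness: the paper hands you formulas, whereas your argument is existential. The continuity of the deformation can be made rigorous by writing $\tau_a=c_{a+\phi^*}-c_a$, where $c_b$ is the standard randomised NP test with $\phi(c_b)=b$; one checks that $\tau_a$ is always of the stated band form (even when both thresholds sit on a common atom, giving $\gamma_1=\gamma_2$) and that $a\mapsto\rho(c_a)$ is continuous because the ROC curve is. Your separate treatment of $\mu=0$ is fine but in fact unnecessary, since the matching argument already covers it.
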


\begin{proof}
    Since, for all $x\in \SX$, $R(x)=\lambda p_I(x) - \mu p_O(x)$, we can write
    \[
    \Rsel(h,c) = \frac{\int_{\SX}R(x)c(x)dx}{\phi(c)} = \frac{\lambda \phi(c)-\mu\rho(c)}{\phi(c)} = \lambda - \mu \frac{\rho(c)}{\phi(c)}\,.
    \]
    For $a,b\in\Re_+$, let $M_{a,b}=\{x\in \SX \mid a < \frac{p_O(x)}{p_I(x)} < b\}$, and $M_a=\{x\in \SX \mid \frac{p_O(x)}{p_I(x)} = a\}$.
    Define continuous functions $\Phi,\Rho: [0,1]^2 \times \Re_+^2 \to [0,1]$ as
    \begin{align*}
        \Phi(\alpha,\beta,s,t) &= \alpha \int_{M_s} p_I(x)dx + \int_{M_{s,t}} p_I(x)dx + \leftbb s < t \rightbb \beta \int_{M_t} p_I(x)dx \,, \\
        \Rho(\alpha,\beta,s,t) &= \alpha \int_{M_s} p_O(x)dx + \int_{M_{s,t}} p_O(x)dx + \leftbb s < t \rightbb \beta \int_{M_t} p_O(x)dx \,.
    \end{align*}
    Distinguish two cases.
    
    Case $\mu<0$. The problem reduces to 
    \begin{equation*}
       \min_{h,c} \frac{\rho(c)}{\phi(c)} 
\qquad\mbox{s.t.}\qquad \tpr(c) \geq \phi_{\rm min}\quad\mbox{and}\quad
 \fpr(c) \leq \rho_{\rm max} \:.
    \end{equation*}

    An optimal solution $\tau$ is obtained by setting
    \begin{align*}
        \gamma_1 &= 0 \,, \\
        \gamma_2 &= \inf \left\{ t\in \Re_+ \mid \Phi(1,1,0,t) \ge \phimin \right\} \,, \\
        \chi_2 &= \left\{ \begin{array}{ll} \inf \{\beta \in [0,1] \mid \Phi(1,\beta,0,\gamma_2) \ge \phimin\} & \text{if } \gamma_2 > 0 \\
        \inf \{\beta \in [0,1] \mid \Phi(\beta,0,0,0) \ge \phimin\} & \text{otherwise}
        \end{array} \right. \,, \\
        \chi_1 &= \left\{ \begin{array}{ll} 1 & \text{if } \gamma_2>0 \\ \chi_2 & \text{otherwise} \end{array} \right. \,.
    \end{align*}
    Note that $\Rho(\chi_1, \chi_2, \gamma_1, \gamma_2)>\rhomax$ means that the problem is not feasible.

 Case $\mu>0$. The problem reduces to 
    \begin{equation*}
       \max_{h,c} \frac{\rho(c)}{\phi(c)} 
\qquad\mbox{s.t.}\qquad \tpr(c) \geq \phi_{\rm min}\quad\mbox{and}\quad
 \fpr(c) \leq \rho_{\rm max} \:.
\end{equation*}

Define a partial function $F:[0,1]\times \Re_+ \to [0,1]\times \Re_+$ such that $F(\alpha,s)=(\beta,t)$ iff
\begin{align*}
    \Rho(\alpha,\beta,s,t)&=\rhomax \,, \\
    t &= \sup \{a \in \Re_+ \mid \Rho(\alpha,0,s,a) \le \rhomax\} \,, \\
    \beta &= \sup \{b \in [0,1] \mid \Rho(\alpha,b,s,t) \le \rhomax\} \,.
\end{align*}

By the assumption that the problem is feasible, an optimal solution $\tau$ is obtained by setting
\begin{align*}
    \gamma_1 &= \sup \{s \in \Re_+ \mid \exists \alpha\in[0,1] : F(\alpha,s)=(\beta,t) \land \Phi(\alpha, \beta, s,t) \ge \phimin\} \,, \\
    \chi_1 &= \sup \{ \alpha \in [0,1] \mid F(\alpha,\gamma_1)=(\beta,t) \land \Phi(\alpha, \beta, \gamma_1, t) \ge \phimin\} \,, \\
    (\chi_2,\gamma_2) &= F(\chi_1,\gamma_1) \,.
\end{align*}

\end{proof}

\subsection{Linear programming formulation of the Bounded TPR-FPR model for finite input sets}
\label{appx:TprFprAsLP}
\begin{lemma}\label{lemma:constcover}
	For any $(h,c)$ optimal to~\equ{equ:TprFprModel},
	$\phi(c) = \phimin$
	unless $\Rsel(h,c)=0$.
\end{lemma}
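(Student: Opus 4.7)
The plan is to argue the contrapositive: assume $(h,c)$ is optimal to~\equ{equ:TprFprModel} with $\Rsel(h,c) > 0$ and, toward contradiction, $\phi(c) > \phimin$; then a single-point perturbation will produce a feasible selective function with smaller selective risk in the generic case, while a scaling modification closes the degenerate case.

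First, I would use the positivity of $\Rsel(h,c)$ to locate a useful point. Since
\[
\Rsel(h,c) = \frac{\int_{\SX} p_I(x)\,c(x)\,r(x)\,dx}{\phi(c)}
\]
is a weighted average of the conditional risk $r(x)=\sum_{y\in\SY}p_I(y\mid x)\ell(y,h(x))$ with weights proportional to $p_I(x)c(x)$, there must exist $x^*\in\SX$ with $c(x^*)>0$, $p_I(x^*)>0$, and $r(x^*)\geq\Rsel(h,c)>0$. I would then perturb $c$ only at $x^*$: for small $\epsilon\in(0,c(x^*)]$ set $c_\epsilon(x^*)=c(x^*)-\epsilon$ and $c_\epsilon(x)=c(x)$ elsewhere. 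Feasibility is automatic: by the strict slack $\phi(c)>\phimin$, $\phi(c_\epsilon)=\phi(c)-\epsilon\,p_I(x^*)\geq\phimin$ for all sufficiently small $\epsilon$, and $\rho(c_\epsilon)=\rho(c)-\epsilon\,p_O(x^*)\leq\rho(c)\leq\rhomax$. A direct calculation, rearranging the definition of $\Rsel$, gives
\[
\Rsel(h,c_\epsilon)-\Rsel(h,c) \;=\; -\,\frac{\epsilon\,p_I(x^*)\,\bigl(r(x^*)-\Rsel(h,c)\bigr)}{\phi(c)-\epsilon\,p_I(x^*)},
\]
which is strictly negative whenever $r(x^*)>\Rsel(h,c)$, contradicting optimality of $(h,c)$.

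The main obstacle is the degenerate case in which $r(x)=\Rsel(h,c)$ for essentially every $x$ in the support of $c$, so the single-point exchange only preserves the selective risk rather than strictly improving it. Here I would invoke the scale invariance of $\Rsel$: setting $c'(x)=\alpha\,c(x)$ with $\alpha=\phimin/\phi(c)\in(0,1)$ produces $\phi(c')=\phimin$, $\rho(c')=\alpha\,\rho(c)\leq\rhomax$, and $\Rsel(h,c')=\Rsel(h,c)$, so $(h,c')$ is an equally optimal selective classifier that saturates the TPR constraint. Combining this with Theorem~\ref{thm:OptSolOfTprFprModelFixedCls}, which exhibits an optimum of the required structural form, lets us pick the representative of the optimal family with $\phi=\phimin$, matching the lemma's conclusion.
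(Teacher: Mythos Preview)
Your argument has two gaps. First, the single-point perturbation $c_\epsilon(x^*)=c(x^*)-\epsilon$ is a measure-zero change when $\SX$ is continuous, so it leaves $\phi$, $\rho$, and $\Rsel$ unchanged; to get a genuine decrease you would have to lower $c$ on a set of positive $p_I$-measure on which $r(x)$ strictly exceeds $\Rsel(h,c)$. Second, in your degenerate case you only manufacture \emph{another} optimum with $\phi=\phimin$ via scaling; you never show that the given $c$ itself has $\phi(c)=\phimin$, which is what the lemma asserts (``for any $(h,c)$ optimal''). Invoking Theorem~\ref{thm:OptSolOfTprFprModelFixedCls} does not close this, since that theorem concerns the structural form of an optimal $c$, not the value of $\phi(c)$. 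In fact, if $r(x)$ is a positive constant on all of $\SX$, every feasible $c$ is optimal with the same positive selective risk, so the ``for any'' version fails outright in that case.

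The paper's proof is a one-line scaling argument and takes a completely different tack: it sets $c'=c/\alpha$ with $\alpha=\phi(c)/\phimin>1$ and asserts $\Rsel(h,c')=\Rsel(h,c)/\alpha<\Rsel(h,c)$. But as you correctly note, $\Rsel$ is scale-invariant in $c$ --- numerator and denominator both acquire the factor $1/\alpha$ --- so that strict inequality does not follow, and the paper's argument has the same defect your degenerate case exposes. What the scaling \emph{does} deliver is the weaker statement that some optimum satisfies $\phi=\phimin$, and that existence claim is all that is actually used downstream (the LP reformulation in Appendix~\ref{appx:TprFprAsLP} and the reduction in Appendix~\ref{appx:OptSolPrecRecallModel}); your degenerate-case paragraph already supplies it cleanly.
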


\begin{proof}
	By contradiction. Assume that $\Rsel(h,c)>0$ and $\phi(c) = \alpha \cdot \phimin$ for some $\alpha > 1$.
	Let $c'$ be the selective function defined by $c'(x)=c(x) / \alpha$ for all $x\in \SX$. Then,
	\begin{align*}
		\phi(c')&=\phimin \,, \\
		\rho(c')&=\frac{\rho(c)}{\alpha} \le \rhomax \,, \\
		\Rsel(h,c')&=\frac{\Rsel(h,c)}{\alpha}<\Rsel(h,c) \,,
	\end{align*}
	and thus $(h,c')$ contradicts the optimality of $(h,c)$.
\end{proof}

If $\SX$ is a finite set, Lemma~\ref{lemma:constcover} enables us to refomulate Problem~\ref{prob:TprFprModelFixedCls} as the following linear program:
\begin{equation}
       \min_{c\in[0,1]^\SX} \sum_{x\in \SX}\frac{1}{\phimin}R(h,x)c(x) 
\quad\mbox{s.t.}\quad \sum_{x\in \SX}p_I(x)c(x) = \phi_{\rm min}\,\,,\,\,
 \sum_{x\in \SX}p_O(x)c(x) \leq \rho_{\rm max} \:.
\end{equation}

\subsection{Proof of Theorem~\ref{thm:PrecRecallModel}}
\label{appx:OptSolPrecRecallModel}

Let $(h,c^*)$ be optimal to~\equ{equ:PrecRecallModel}.
Denote $C = \phi(c^*)$. By rewriting~\equ{equ:PrecRecallModel}, it turns out that $(h,c^*)$ is optimal to
\begin{equation*}
\min_{h,c} \int_\SX \frac{1}{C}R(x)c(x)dx
\quad\mbox{s.t.}\quad 
\begin{array}{rcl}
\phi(c) & = &C \\
\rho(c) &\leq & \frac{(1-\pi)(1-\kappa_{\rm min})}{\pi \kappa_{\rm min}} C \,. \\
\end{array}
\end{equation*}
According to Lemma~\ref{lemma:constcover}, this is synonymous with~\equ{equ:TprFprModel}, and as a result, Theorem~\ref{thm:OptSolOfTprFprModelFixedCls} is applicable to $c^*$.


\section{Post-hoc tuning and evaluation metrics}
\label{appx:PostHocAndMetric}

\subsection{Figures}
\label{appx:PostHocAndMetricFigures}

In case of the bounded TPR-FPR model, the objective, and also the evaluation metric, is the selective risk attained $\RselSamp$ at minimal acceptable TPR $\rho_{\rm min}$ and maximal acceptable FPR $\rho_{\rm max}$. In addition to reporting a selective risk for a single operating point, it can be useful to fix the maximal acceptable FPR $\rho_{\rm max}$ and show the selective risk $\RselSamp$ as the function of varying TPR/coverage $\phi_{\rm max}$, which yields the Risk-Coverage curve at FPR $\rho_{\rm max}$. The RC curve at $\rho_{\rm max}=0.2$ for our example on synthetic data is shown in Figure~\ref{fig:curves}(a). The proposed double score method $D(\Re)$ is seen to achieve the lowest selective risk in the entire range of coverages available. The selective risk of the methods $D(\Re)$ and $C(0)$ is the same; however, the method $C(0)$ has much lower maximal attainable coverage, namely, $\phi_{\rm max}=0.58$ and hence the method is marked as unable to achieve the target coverage; see Table~\ref{tab:ResultsSynthetic}. 

The problem of defining the TPR-FPR model~\equ{equ:EmpiricalTprFprModel} can be infeasible. To choose a feasible target value of $\tpr_{\rm min}$ and $\fpr_{\rm max}$, it is advantageous to plot the ROC curve, that is, the TPR and FPR values attainable by the classifiers in $\SQ$. ROC curve for the methods in our example is shown in Figure~\ref{fig:curves}(b). The operation point $(\phi_{\rm min},\rho_{\rm max})$ is attainable if the ROC curve of the given method is entirely above the point.

In case of the bounded Precision-Recall model, the objective, and also the evaluation metric, is the selective risk attained $\RselSamp$ at minimal acceptable Precision $\kappa_{\rm min}$ and minimal  acceptable Recall/TPR $\phi_{\rm min}$. In our example, the single-score method achieves the same selective risk under both models as we use the same target TPR/recall and the selective risk is a monotonic function of the score, see discussion in Sec.~\ref{sec:MetricsPrecRecallModel}, hence we do not show the risk-coverage curve at fixed precision. However, we show the Precision-Recall curve, Figure~\ref{fig:curves}(c), which is useful for determining the feasible target value for precision and recall. Again, the operation point $(\kappa_{\rm min},\phi_{\rm min})$ is achievable if the PR curve of the given method is entirely above the point.

\begin{figure}[th!]
    \centering
    \begin{tabular}{cc}
    \multicolumn{2}{c}{a) Risk-coverage curve at FPR $\rho_{\rm max}=0.2$}\\
    \multicolumn{2}{c}{\includegraphics[width=0.5\textwidth]{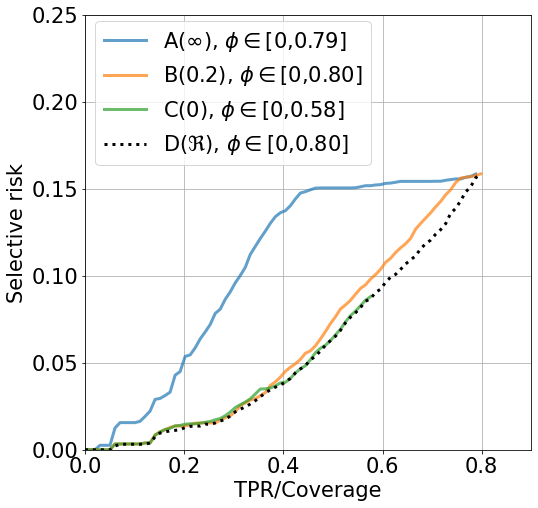}} \\     
    b) ROC curve&
    c) Prec-Recall curve \\
   \includegraphics[width=0.45\textwidth]{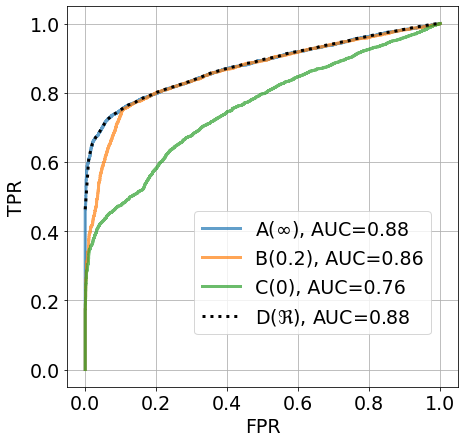}
    & \includegraphics[width=0.45\textwidth]{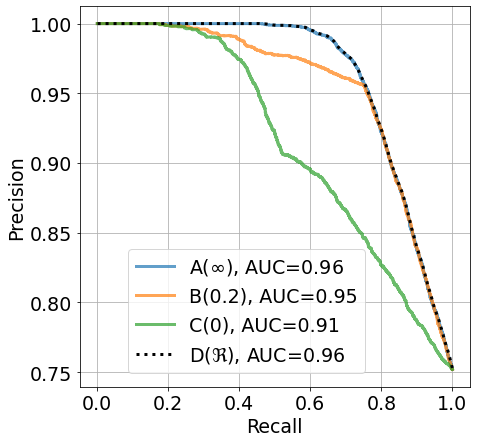}
    \\
     \end{tabular}
    \caption{Evaluation curves for the exemplar methods on the synthetic data. The risk-coverage/TPR curve at the maximal acceptable FPR is shown in Fig. a). For each method we also show attainable coverage $\phi$. The ROC curve and the Precision-Recall curve are shown in Fig. b) and c), respectively.}
    \label{fig:curves}
\end{figure}

\subsection{Algorithms}
\label{appx:PostHocAndMetricAlgos}

The single-score OODD methods output a set of selective OOD classifiers $\SQ=\{ (h,c) \mid c(x)=\leftbb s(x) \leq \lambda \rightbb\,, \lambda\in\Re\}$ parameterized by the decision threshold $\lambda$. Double-score OODD methods output a set $\SQ=\{(h,c)\mid c(x)=\leftbb s_r(x) + \mu\, s_g(x)\leq \lambda \rightbb\,, \mu\in\Re\,,\lambda\in\Re \}$ parameterized by $\lambda\in\Re$ and $\mu\in\Re$.

The post hoc tuning aims to find the best OOD selective classifier out of $\SQ$ based on the appropriate metric. To this end, the existing methods used the AUROC, AUPR of OSCR score as the metric to find the best classifier. Instead, we formulate the bounded TPR-FPR and the bound Precision-Recall model, where we find the best selective amounts to solving the constrained optimization problem~\equ{equ:EmpiricalTprFprModel} and~\equ{equ:EmpiricalPrecReallModel}, respectively.

In case of single-score methods, the problems~\equ{equ:EmpiricalTprFprModel} and~\equ{equ:EmpiricalPrecReallModel} are 1-D optimization, namely, one needs to find the decision threshold $\lambda\in\Re$ which leads to the minimal selective risk and simultaneously satisfies both constraints on the validation set $\ST=((x_i,\bar{y}_i)\in\SX\times\bar{\SY}\mid i=1,\ldots,n )$. The threshold $\lambda$ influences the involved metrics, that is, ($\RselSamp(h,c)$, $\tprSamp(c)$, $\fprSamp(c)$, $\precSamp(c)$), only via the value of the selective function $c(x)=\leftbb s(x) \leq \lambda \rightbb$ which is a step function of the optimized threshold $\lambda$. Hence, we can see ($\RselSamp(\lambda)$, $\tprSamp(\lambda)$, $\fprSamp(\lambda)$, $\precSamp(\lambda)$), as a function of $\lambda$ and we can find all $n+1$ achievable values of ($\RselSamp(\lambda)$, $\tprSamp(\lambda)$, $\fprSamp(\lambda)$, $\precSamp(\lambda)$) in a single sweep over the validation examples $\ST$ sorted according to the value of $s(x_i)$. This procedure has complexity $\SO(n\log n)$ attributed to the sorting of $n$ examples. 

In case of the double-score methods, we need to optimize w.r.t. $\lambda$ and $\mu$ which are the free parameters of the selective function $c(x)=\leftbb s_r(x) + \mu\, s_g(x)\leq \lambda \rightbb$. The selective classifier can be seen as a binary in 2-D space. Hence, we equivalently parameterize the selective function as $c(x)=\leftbb s_r(x),\cos(\alpha) + s_g(x)\,\sin(\alpha)\leq \lambda' \rightbb$ where $\alpha\in\SA=[0,\pi]$ and $\lambda'\in\Re$. We approximate $\SA$ by a finite set $\bar{\SA}\subset\SA$, where $\bar{\SA}$ contains $d$ equidistantly placed values over the interval $[0,\pi]$. For each $\alpha\in\bar{\SA}$, we compute all values $n+1$ of($\RselSamp(\lambda)$, $\tprSamp(\lambda)$, $\fprSamp(\lambda)$, $\precSamp(\lambda)$), using the algorithm described above. We found that setting $d=360$ is enough, as higher values $d$ do not change the results.

\end{document}